\newcommand{\D}{{\mathop{}\!\mathrm{d}}} 
\newcommand{\R}{\mathbb{R}}
\newcommand{\Q}{\mathbb{Q}}
\newcommand{\N}{\mathbb{N}}
\newcommand{\PP}{\mathbb{P}}
\newcommand{\E}{\mathbb{E}}
\newcommand{\T }{\mathcal{T}}
\newcommand{\A}{\mathcal{A}}
\newcommand{\X}{\mathcal{X}}
\newcommand{\rrightarrow}{\mathrel{\mathrlap{\rightarrow}\mkern1mu\rightarrow}}
\newcommand{\one}{ 1 \hspace{-3pt} \mathrm{l}} %
\newcommand{\aloc}{{a_{\operatorname{loc}}}}
\newcommand{\ab}{{\mathbf{a}}}
\newtheorem{exa}[thm]{Example}
\newcommand{\removelatexerror}{\let\@latex@error\@gobble}
\begin{document}

\begin{frontmatter}

\title{Robust $Q$-learning Algorithm for Markov Decision Processes under Wasserstein Uncertainty\thanksref{footnoteinfo}} 

\thanks[footnoteinfo]{Corresponding author A. Neufeld}

\author[NTU]{Ariel Neufeld}\ead{ariel.neufeld@ntu.edu.sg},    
\author[NUS]{Julian Sester}\ead{jul{\_}ses@nus.edu.sg}               

\address[NTU]{NTU Singapore, Division of Mathematical Sciences, 21 Nanyang Link, Singapore 637371.}  
\address[NUS]{National University of Singapore, Department of Mathematics, 21 Lower Kent Ridge Road, 119077}             

\begin{keyword}                           
$Q$-learning; Markov Decision Process; Wasserstein Uncertainty; Distributionally Robust Optimization; Reinforcement Learning.               
\end{keyword}                             

\begin{abstract}                          
We present a novel $Q$-learning algorithm tailored to solve distributionally robust Markov decision problems where the corresponding ambiguity set of transition probabilities for the underlying Markov decision process is a Wasserstein ball around a (possibly estimated) reference measure. 
We prove convergence of the presented algorithm and provide several examples also using real data to illustrate both the tractability of our algorithm as well as the benefits of considering distributional robustness when solving stochastic optimal control problems, in particular when the estimated distributions turn out to be misspecified in practice.
\end{abstract}

\end{frontmatter}

\section{Introduction}
The among practitioners popular and widely applied \emph{$Q$-learning} algorithm provides a tractable reinforcement learning methodology to solve Markov decision problems  (MDP). The $Q$-learning algorithm learns an optimal policy online via observing at each time the current state of the underlying process as well as the reward depending on the current (and possibly next) state when acting according to a (not necessarily) optimal policy and by assuming to act optimally after the next state. The observed rewards determine a function $Q$ depending on a state-action pair that describes the \emph{quality} of the chosen action when being in the observed state. After a sufficient amount of observations the function $Q$ then allows in each state to decide which actions possess the most \emph{quality}. In this way the $Q$-learning algorithm determines an optimal policy.

 The $Q$-learning algorithm was initially proposed in Watkins' PhD thesis (\cite{watkins1989learning}).  \cite{jaakkola1994convergence} and \cite{watkins1992q} then provided a rigorous mathematical proof of the convergence of the $Q$-learning algorithm to the optimal $Q$-value function using results from stochastic approximation theory (see e.g.\,\cite{dvoretzky1956stochastic} and \cite{robbins1951stochastic}). The design of the $Q$-learning algorithm as well as the proof of its convergence to the optimal $Q$-value both rely on the dynamic programming principle of the corresponding Markov decision problem, which allows to find an optimal policy for the involved infinite horizon stochastic optimal control problem by solving a one time-step optimization problem. We refer to \cite{al2007model}, \cite{angiuli2022reinforcement}, \cite{angiuli2021reinforcement},  \cite{cao2021deep}, \cite{charpentier2021reinforcement}, \cite{hambly2021recent}, \cite{huang2020deep}, \cite{jeong2019improving}, \cite{kolm2020modern}, \cite{naghibi2006application}, \cite{ning2021double}, and \cite{wang2021online} for various successful applications of the $Q$-learning algorithm.

Recently, there has been a huge focus in the literature starting from  the viewpoint that one might have an estimate of the correct transition probability of the underlying Markov decision process, for example through the empirical measure derived from past observed data, but one faces the risk of misspecifying the correct distribution and hence would like to consider a distributionally robust Markov decision process (compare \cite{bauerle2021distributionally_2}, \cite{bauerle2021q}, \cite{chen2019distributionally},
\cite{ElGhaouiNilim2005robust},  \cite{hakobyan2023distributionally}, \cite{li2023policy}, \cite{liu2022distributionally},
\cite{mannor2016robust}, \cite{neufeld2022markov}, \cite{panaganti2022sample}, \cite{si2020distributional}, \cite{si2020distributionally},  \cite{uugurlu2018robust}, \cite{wang2022policy}, \cite{wiesemann2013robust}, \cite{xu2010distributionally}, \cite{yang2017convex}, \cite{yang2021towards}, and \cite{zhou2021finite}), also called \emph{Markov decision process under model uncertainty}, where one maximizes over the worst-case scenario among all probability measures of an ambiguity set of transition probabilities. We also refer to, e.g, the following related distributionally robust stochastic control problems \cite{chen2019distributionally}, \cite{coulson2019regularized}, \cite{guo2018data}, \cite{uugurlu2018robust}, \cite{van2015distributionally}, \cite{wiesemann2014distributionally},  and \cite{yang2020wasserstein} beyond the MDP setting.
Indeed, as discussed in \cite{liu2022distributionally}, there is a common risk in practice that one cannot fully capture the probabilities of the real-world environment due to its complexity and hence the corresponding reinforcement learning algorithm will be trained based on  misspecified probabilities. In addition, there is the risk that the environment shifts between the training period and the testing period. This situation can often be observed in practice as the future evolution of random processes rarely behaves \emph{exactly} according to, for example, the observed historical evolution. One may think as a prime example of financial markets, where several financial crises revealed repeatedly that used models were strongly misspecified. We refer to \cite{liu2022distributionally} for further examples, e.g.\ in robotics, and a further general discussion on the need of considering distributionally robust Markov decision processes and corresponding reinforcement learning based algorithms.

While there has been a lot of contributions in the literature on distributionally robust Markov decision problems, only very recently, to the best of our knowledge, there has been a first $Q$-learning algorithm developed in \cite{liu2022distributionally}  to solve distributionally robust Markov decision problems. More precisely, in \cite{liu2022distributionally} the authors recently introduced a $Q$-learning algorithm tailored for distributionally robust Markov decision problems where the corresponding ambiguity set of transition probabilities consists of all probability measures which are $\varepsilon$-close to a reference measure with respect to the Kullback-Leibler (KL) divergence, and prove its convergence to the optimal robust Q-value function.

The goal of this paper is to provide a $Q$-learning algorithm which can solve distributionally robust Markov decision problems where the corresponding ambiguity set of transition probabilities for the underlying Markov decision process is a Wasserstein ball around a (possibly estimated) reference measure. We obtain theoretical guarantees of convergence of our $Q$-learning algorithm to the corresponding optimal robust $Q$-value function (see also \eqref{eq_definition_qstar}).  The design of our $Q$-learning algorithm combines the dynamic programming principle of the corresponding Markov decision process under model uncertainty (see, e.g., \cite{neufeld2022markov}) and a convex duality result for worst-case expectations with respect to a Wasserstein ball (see \cite{bartl2020computational}, \cite{blanchet2019quantifying}, \cite{gao2022distributionally}, \cite{mohajerin2018data}, and \cite{zhao2018data}). 

From an application point of view, considering the Wasserstein distance has the crucial advantage that a corresponding Wasserstein-ball consists of  probability measures which do not necessarily share the same support as the reference measure, compared to the KL-divergence, where by definition probability measures within a certain fixed distance to the reference measure all need to have a corresponding support included in the support of the reference measure. We highlight that from a structural point of view, our $Q$-learning algorithm is different than the one in \cite{liu2022distributionally}, which roughly speaking comes from the fact that the dual optimization problem with respect to the Wasserstein distance has a different structure than the corresponding one with respect to the KL-divergence.

We demonstrate in several examples also using real data that our \emph{robust} $Q$-learning algorithm determines \emph{robust} policies that outperform non-robust policies, determined by the classical $Q$-learning algorithm, given that the probabilities for the underlying Markov decision process turn out to be misspecified.

The remainder of the paper is as follows. In Section~\ref{sec_setting} we introduce the underlying setting of the corresponding Markov decision process under model uncertainty. In Section~\ref{sec_robust_q_learning} we present our new $Q$-learning algorithm and provide our main result: the convergence of this algorithm to the optimal robust $Q$-value function. Numerical examples demonstrating the applicability as well as the benefits of our $Q$-learning algorithm compared to the classical $Q$-learning algorithm are provided in Section~\ref{sec_examples}. All proofs and auxiliary results are provided in Appendix~\ref{sec_auxiliary_results} and~\ref{sec_proofs}, respectively

\section{Setting and Preliminaries}\label{sec_setting}
In this section we provide the setting and define necessary quantities to define our $Q$-learning algorithm for distributionally robust stochastic optimization problems under Wasserstein uncertainty.
\subsection{Setting}\label{subsec_setting}
Optimal control problems are defined on a state space containing all the states an underlying stochastic process can attain. We model this state space as a finite subset $\mathcal{X} \subset \R^d$ where $d \in \N$ refers to the dimension of the state space.
We consider the robust control problem over an infinite time horizon, hence the space of all attainable states in this horizon is given by the infinite Cartesian product
$
\Omega:=\mathcal{X}^{\N_0}=\mathcal{X}\times \mathcal{X} \times \cdots$,  with the corresponding $ \sigma$-algebra $\mathcal{F}: = 2^{\mathcal{X}} \otimes 2^\mathcal{X} \otimes \cdots$.
On $\Omega$ we consider a stochastic process that describes the states that are attained over time. To this end, we
let $\left(X_{t}\right)_{t\in \N_0}$ be the canonical process on $\Omega$, that is defined by $X_t\left(x_0,x_1,\dots,x_t,\dots\right):=x_t$ for each $\left(x_0,x_1,\dots,x_t,\dots\right) \in \Omega$,  $t \in \N_0$.

Given a realization $X_t$ of the underlying stochastic process at some time $t \in \N_0$, the outcome of the next state $X_{t+1}$ can be influenced through actions that are executed in dependence of the current state $X_t$. At any time the  set of possible actions is given by a finite set $A \subseteq \R^m$,  where $m \in \N$ is the dimension of the action space (also referred to as control space). The set of admissible policies $\mathcal{A}$ over the entire time horizon contains all sequences of actions that depend at any time only on the current observation of the state process $(X_t)_{t\in \N_0}$ formalized by
\begin{align*}
\mathcal{A}:&=\bigg\{\ab=(a_t)_{t \in \N_0}~\bigg|~(a_t)_{t \in \N_0}: \Omega \rightarrow A;\\
&\hspace{1.8cm} a_t \text{ is } \sigma(X_{t})\text{-measurable} \text{ for all } t \in \N_0 \bigg\}\\
&=\bigg\{\left(a_t(X_t)\right)_{t\in \N_0}~\bigg|~ a_t:\mathcal{X} \rightarrow A \text{ Borel measurable} \\
&\hspace{5.3cm}\text{  for all }  t \in \N_0 \bigg\}.
\end{align*}
The current state and the chosen action influence the outcome of the next state by influencing the probability distribution with which the subsequent state is realized. As we take into account model uncertainty we assume that the correct probability kernel is unknown and hence, for each given state $x$ and action $a$, we consider an ambiguity set of  probability distributions representing the set of possible probability laws for the next state.
We denote by $\mathcal{M}_1(\Omega)$ and $\mathcal{M}_1(\X)$ the set of probability measures on $(\Omega, \mathcal{F})$ and $(\X,2^\X)$ respectively, and we assume that an ambiguity set of probability measures is modelled by a set-valued map
\begin{equation}\label{eq_defn_pxa}
\mathcal{X} \times A \ni (x,a) \rrightarrow \mathcal{P}(x,a) \subseteq \mathcal{M}_1(\mathcal{X}).
\end{equation}
Hence, if at time $t\in \N_0$ the process $X_t$ attains the value $x\in \X$, and the agent decides to execute action $a \in A$, then  $\mathcal{P}(x,a)$ describes the set of possible probability distributions with which the next state $X_{t+1}$ is realized. If $\mathcal{P}(x,a)$ is single-valued, then the state-action pair $(x,a)$ determines unambiguously the transition probability, and the setting coincides with the usual setting used for classical (i.e., non-robust) Markov decision processes, compare e.g. \cite{bauerle2011markov}.

The ambiguity set of admissible probability distributions on $\Omega$ depends therefore on the initial state $x\in X$ and the chosen policy $\ab \in \mathcal{A}$. We define for every initial state $x \in \mathcal{X}$ and every policy $\ab \in \mathcal{A}$ the set of admissible underlying probability distributions of $(X_t)_{t\in \N_0}$ by
\begin{align*}
\mathfrak{P}_{x,\ab}:=\bigg\{\delta_x \otimes \PP_0\otimes \PP_1 \otimes \cdots~\bigg|~&\text{ for all } t \in \N_0: \\
&\hspace{-3cm}\PP_t:\mathcal{X} \rightarrow \mathcal{M}_1(\mathcal{X}) \text{ Borel-measurable, } \\ 
&\hspace{-3cm}\text{ and }\PP_t(x_t) \in  \mathcal{P}(x_t,a_t(x_t)) \text{ for all } x_t\in \mathcal{X} \bigg\},
\end{align*}
where the notation $\PP=\delta_x \otimes\PP_0\otimes \PP_1 \otimes\cdots \in \mathfrak{P}_{x,\ab}$ abbreviates
\begin{align*}
\PP(B):=\sum_{x_0 \in \mathcal{X}}&\cdot \sum_{x_t \in \mathcal{X}} \cdots \one_{B}\left((x_t)_{t\in \N_0}\right) \cdots \PP_{t-1}(x_{t-1};\{ x_t\})\\
& \cdots \PP_0(x_0;\{x_1\}) \delta_x(\{x_0\}), \qquad B \in \mathcal{F}.
\end{align*}
\begin{rem}
In the literature of robust Markov decision processes one  refers to $\mathfrak{P}_{x,\ab}$ as being $(s,a)$-rectangular, see, e.g., \cite{iyengar2005robust}, \cite{shapiro2016rectangular},  \cite{wiesemann2013robust}. This is a common assumption which turns out to be crucial to obtain a dynamic programming principle (see, e.g.,  \cite[Theorem 2.7]{neufeld2022markov} and \cite{ruszczynski2006conditional}) and therefore to enable efficient and tractable computations. Indeed, if one weakens this assumption the problem becomes computationally more expensive (see, e.g, \cite[Section 2]{behzadian2021fast}), or can be provably intractable (compare \cite{li2023policy}) and therefore cannot be solved by dynamic programming methods. Several approaches to solve robust MDPs w.r.t.\,non-rectangular ambiguity sets using  methods other than dynamic programming however have recently been proposed, and are described in \cite{goyal2023robust}, \cite{li2023policy}, and \cite{tirinzoni2018policy}. 
\end{rem}
To determine \emph{optimal} policies we reward actions in dependence of the current state-action pair and the subsequent realized state. To this end, let $r:\mathcal{X}\times A \times \mathcal{X} \rightarrow \R$ be some \emph{reward function}, and let $\alpha \in \R$ be a \emph{discount factor} fulfilling 
\begin{equation} \label{eq_condition_alpha}
0 < \alpha < 1.
\end{equation}
Then, our \emph{robust} optimization problem consists, for every initial value $x\in \mathcal{X}$, in maximizing the expected value of $\sum_{t=0}^\infty \alpha^tr(X_{t},a_t,X_{t+1})$ under the worst case measure from $\mathfrak{P}_{x,\ab}$ over all possible policies $\ab \in \A$. More precisely, we aim for every $x\in \X$ to maximize 
$
\inf_{\PP \in \mathfrak{P}_{x,\ab}} \left(\E_{\PP}\bigg[\sum_{t=0}^\infty \alpha^tr(X_{t},a_t,X_{t+1})\bigg]\right)
$
among all policies $\ab \in \mathcal{A}$. The value function given by
\begin{equation}\label{eq_robust_problem_1}
\begin{aligned}
   \mathcal{X} \ni x \mapsto V(x):&=\sup_{\ab \in \mathcal{A}}\inf_{\PP \in \mathfrak{P}_{x,\ab}} \E_{\PP}\bigg[\sum_{t=0}^\infty \alpha^tr(X_{t},a_t,X_{t+1})\bigg]
\end{aligned}
\end{equation}
then describes the expectation of $\sum_{t=0}^\infty \alpha^tr(X_{t},a_t,X_{t+1})$ under the worst case measure from $\mathfrak{P}_{x,\ab}$ and under the optimal policy from $\ab \in \mathcal{A}$ in dependence of the initial value.
\subsection{Specification of the Ambiguity Sets}\label{sec_ambiguity_set}
To specify the ambiguity set $\mathcal{P}(x,a)$ for each $(x,a) \in \X \times A$, we first consider for each $(x,a) \in \X \times A$ a reference probability measure. In applications, this reference measure may be derived from observed data. Considering an ambiguity set related to this reference measure then allows to respect deviations from the historic behavior in the future and leads therefore to a more \emph{robust} optimal control problem that allows to take into account adverse scenarios, compare also \cite{neufeld2022markov}.
To that end, let
\begin{equation}\label{eq_defn_widehat_p}
\mathcal{X} \times A \ni (x,a) \mapsto \widehat{\PP}(x,a) \in \mathcal{M}_1(\mathcal{X}).
\end{equation}
be a probability kernel, where $\widehat{\PP}(x,a)$ acts as reference probability measure for each $(x,a) \in \X \times A$.
Then, for every $(x_0,\ab) \in \mathcal{X} \times \mathcal{A}$ we denote by
\begin{equation}\label{eq_defn_P_MDP}
\widehat{\PP}_{x_0,\ab} := \delta_{x_0} \otimes  \widehat{\PP}(\cdot,a_0(\cdot))\otimes \widehat{\PP}(\cdot,a_1(\cdot)) \otimes \cdots \in \mathcal{M}_1(\Omega)
\end{equation}
the corresponding probability measure on $\Omega$ that determines the distribution of $(X_t)_{t\in \N_0}$ in dependence of initial value $x_0 \in \X$ and the policy $\ab \in \mathcal{A}$, i.e., we have for any $B \in \mathcal{F}$ that
\begin{align*}
\widehat{\PP}_{x_0,\ab}(B):=&\sum_{x_0 \in \mathcal{X}}\cdots \sum_{x_t \in \mathcal{X}} \cdots \one_{B}\left((x_t)_{t\in \N_0}\right) \cdots \\
&\cdot\widehat{\PP}(x_{t-1},a_{t-1}(x_{t-1});\{ x_t\})\\
&\cdots \widehat{\PP}(x_0,a_0(x_0);\{x_1\}) \delta_x(\{x_0\}).
\end{align*}
We provide two specifications of ambiguity sets of probability measures $\mathcal{P}(x,a)$, $(x,a) \in \X \times A$, as defined in \eqref{eq_defn_pxa}. Both ambiguity sets rely on the assumption that for each given $(x,a) \in \X \times A$ the uncertainty with respect to the underlying probability distribution is modelled through a Wasserstein-ball around the reference probability  measure  $\widehat{\PP}(x,a)$ on $\mathcal{X}$.

To that end, for any  $q\in \N$, and any $\PP_1,\PP_2 \in \mathcal{M}_1(\mathcal{X})$, consider the $q$-Wasserstein-distance
\begin{align*}
W_q(\PP_1,\PP_2):&=\left(\inf_{\pi \in \Pi(\PP_1,\PP_2)}\int_{\mathcal{X} \times \mathcal{X}} \|x-y\|^q \D \pi(x,y)\right)^{1/q},
\end{align*}
where $\|\cdot \|$ denotes the Euclidean norm on $\R^d$ and where $\Pi(\PP_1,\PP_2) \subset \mathcal{M}_1(\mathcal{X} \times \mathcal{X})$ denotes the set of joint distributions of $\PP_1$ and $\PP_2$. Since we consider probability measures on a finite space we have a representation of the form
\[
\PP_i = \sum_{x \in \mathcal{X}} a_{i,x} \delta_x,\text{ with } \sum_{x \in \mathcal{X}} a_{i,x} = 1,~ a_{i,x} \geq 0 
\]
for all $x \in \X $ for $ i=1,2$,
where $\delta_x$ denotes the Dirac-measure at point $x \in \X$.
 Hence, the $q$-Wasserstein-distance can also be written as
\begin{align*}
W_q(\PP_1,\PP_2):&=\bigg(\min_{\pi_{x,y} \in \widetilde{\Pi}(\PP_1,\PP_2)}\sum_{x,y \in \mathcal{X}} \|x-y\|^q  \cdot \pi_{x,y}\bigg)^{1/q},
\end{align*}
where 
\begin{align*}
\widetilde{\Pi}(\PP_1,\PP_2):=\bigg\{\left(\pi_{x,y}\right)_{x,y \in \mathcal{X}}\subseteq [0,1]~&\bigg|~ \sum_{x'\in \X}\pi_{x',y} = a_{2,y}, \\
&\hspace{-2cm}\sum_{y' \in \X}\pi_{x,y'} = a_{1,x} \text{ for all } x,y \in \X \bigg\}.
\end{align*}
Relying on the above introduced Wasserstein-distance we define two ambiguity sets of probability measures.
\subsection*{\underline{Setting 1.)} The ambiguity set $\mathcal{P}_1^{(q,\varepsilon)}$}~\\
We consider for any fixed $\varepsilon>0$ and $q\in \N$  the  ambiguity set
\begin{equation}\label{eq_def_P1}
\begin{aligned}
\mathcal{X} \times A \ni (x,a) \rrightarrow \mathcal{P}_1^{(q,\varepsilon)}(x,a):=\bigg\{\PP\in \mathcal{M}_1(\mathcal{X})~&\text{s.t.}\\
&\hspace{-2cm}W_q(\PP,\widehat{\PP}(x,a)) \leq  \varepsilon \bigg\}
\end{aligned}
\end{equation}
being the $q$-Wasserstein ball with radius $\varepsilon$ around the reference measure $\widehat{\PP}(x,a)$, defined in \eqref{eq_defn_widehat_p}. For each $(x,a) \in \X \times A$ the ambiguity set $\mathcal{P}_1^{(q,\varepsilon)}(x,a)$ contains all probability measures that are close to $\widehat{\PP}(x,a)$ with respect to the $q$-Wasserstein distance. In particular, $\mathcal{P}_1^{(q,\varepsilon)}(x,a)$ contains also measures that are not necessarily dominated by the reference measure $\widehat{\PP}(x,a)$.
\subsection*{\underline{Setting 2.)} The ambiguity set $\mathcal{P}_2^{(q,\varepsilon)}$}~\\
We next define an ambiguity set that  can particularly be applied when autocorrelated time-series are considered. 
In this case we assume that the past $h \in \N \cap [2, \infty)$ values of a time series $(Y_t)_{t=-h+1,-h+2,\dots}$ may have an influence on the subsequent value of the state process. Then, at time $t\in \N_0$ the state vector is  given by
\begin{equation}\label{eq_x_equals_Y}
X_t = (Y_{t-h+1},\dots,Y_t)\in \X:= \mathcal{Y}^h\subset \R^{D \cdot h}, 
\end{equation}
$\text{with } \mathcal{Y}\subset \R^D \text{  finite, }$ where $D \in \N$ describes the dimension of each value $Y_t \in \mathcal{Y}\subset \R^D$.

 An example is given by financial time series of financial assets, where not only the current state, but also past realizations may influence the subsequent evolution of the assets and can therefore be modelled to be a part of the state vector, compare also the presentation in \cite[Section 4.3.]{neufeld2022markov}.
 
Note that at each time $t \in \N_0$ the part $(Y_{t-h+2},\dots,Y_{t}) \in \R^{D \cdot (h-1)}$ of the state vector $X_{t+1}$ that relates to past information can be derived once the current state $X_{t}=(Y_{t-h+1}, Y_{t-h+2},\dots,Y_{t})$ is known. Only the realization of $Y_{t+1}$ is subject to uncertainty. 
Conditionally on $X_{t}$ the distribution of $X_{t+1}$ should therefore be of the form $\delta_{\left(Y_{t-h+2},\dots,Y_{t}\right)}\otimes \widetilde{\PP}\in \mathcal{M}_1(\X)$ for some probability measure $\widetilde{\PP} \in \mathcal{M}_1(\mathcal{Y})$.

We write, given some $x =(x_1,\dots,x_h) \in \X$,
\begin{equation}\label{eq_pi_k}
\pi(x):=(x_2,\dots,x_h)\in \mathcal{Y}^{h-1}
\end{equation}
such that $x = \left(x_1,\pi(x)\right) \in\X$ and such that $\pi(X_t)=(Y_{t-h+2},\dots,Y_{t})$.
The vector $\pi(x)$ denotes the projection of $x$ onto the last $h-1$ components and represents the part of the state $x\in \X$ that is carried over to the subsequent state and is therefore not subject to any uncertainty.
To reflect the fact that the first $h-1$ components can be deterministicly derived once the previous state is known, we impose now the assumption that the reference kernel is of the form \begin{equation}\label{eq_defn_pi_P_hat_tilde}
\X \times A \ni (x,a)\mapsto \widehat{\PP}(x,a)=\delta_{\pi(x)} \otimes \widehat{\widetilde{\PP}}(x,a) \in \mathcal{M}_1(\X),
\end{equation}
where $\widehat{\widetilde{\PP}}$ is a probability kernel defined by
$
\mathcal{X} \times A \ni (x,a) \mapsto \widehat{\widetilde{\PP}}(x,a)\in \mathcal{M}_1(\mathcal{Y}).
$
This allows us to define for any fixed $\varepsilon>0$ and $q\in \N$ the ambiguity set\footnote{By abuse of notation $W_q$ here denotes the $q$-Wasserstein distance on $\mathcal{M}_1(\mathcal{Y})$.}
\begin{equation}\label{eq_def_P2}
\begin{aligned}
\X \times A \ni (x,a) \rrightarrow \mathcal{P}_2^{(q,\varepsilon)}(x,a):=\Bigg\{&\PP \in \mathcal{M}_1(\mathcal{X})~\text{ s.t. }~\\
&\hspace{-5.5cm}\PP= \delta_{\pi (x)} \otimes \widetilde{\PP}\text{ for } \widetilde{\PP} \in \mathcal{M}_1(\mathcal{Y}) \text{ with} ~W_q(\widetilde{\PP},\widehat{\widetilde{\PP}}(x,a))\leq \varepsilon \Bigg\},
\end{aligned}
\end{equation}
i.e., for each $(x,a) \in \X \times A$ we consider all measures of the form $\delta_{\pi(x)} \otimes \widetilde{\PP}$ for  $\widetilde{\PP}$ being close in the $q$-Wasserstein distance to $\widehat{\widetilde{\PP}}(x,a)$.

From now on, the ambiguity set of probability measures $\mathcal{P}(x,a)$, $(x,a) \in \X \times A$, either corresponds to $\mathcal{P}_1^{(q,\varepsilon)}(x,a)$, $(x,a) \in \X \times A$, defined by \eqref{eq_def_P1}, or to $\mathcal{P}_2^{(q,\varepsilon)}(x,a)$, $(x,a) \in \X \times A$, defined by \eqref{eq_def_P2}.
\begin{rem}\label{rem_historical}
In various applications such as for example portfolio optimization in finance (\cite[Section 4]{neufeld2022markov}), an agent would like to choose at each time $t$ an action $a_t$ not only based on the current observation of the state process but also based on some historical observations. To be able to cover such a scenario also in the context of Markov Decision Problems, it is a well-known procedure to extend the state space to be able to also include historical observations into the current state. The ambiguity set $\mathcal{P}_2^{(q,\varepsilon)}$ can therefore be seen as the natural extension of $\mathcal{P}_1^{(q,\varepsilon)}$ tailored exactly for that scenario described above. We highlight that in that case, given an agent observes $X_t= (Y_{t-h+1},\dots,Y_t)$ at time $t$, the only uncertainty on $X_{t+1}= (Y_{t-h+2},\dots,Y_t, Y_{t+1})$ lies in the last component $Y_{t+1}$, and not in the whole vector $X_{t+1}$, as the other components are observed through $X_t$.  This explains the structure of the corresponding measures in $\mathcal{P}_2^{(q,\varepsilon)}$ involving Dirac measures.
\end{rem}
\begin{rem}\label{rem_wasserstein}
Using the Wasserstein distance for capturing distributional uncertainty differs significantly from employing the Kullback-Leibler distance, which was used, e.g., in \cite{liu2022distributionally}. By using an ambiguity set defined via the Wasserstein distance, one can consider all probability distributions that are in proximity to a reference measure, even if they are not necessarily absolutely continuous with respect to it. This becomes important when the reference measure is estimated from historical data and contains point masses at the observed values, but one does not want to restrict future values to those observed in the past. In contrast, if one is confident about the support of the underlying transition kernel, it can be advantageous to use an ambiguity set defined using a distance such as the Kullback-Leibler distance which only considers probability measures with the same support (or smaller) as the reference measure.
\end{rem}
\subsection{Definition of Operators}
We consider the following single time step optimization problem 
\begin{equation}\label{eq_bellman_v}
\mathcal{T}V(x):=\sup_{a \in A} \inf_{\PP \in \mathcal{P}(x,a)} \E_{\PP}\big[r(x,a,X_1)+\alpha  V(X_1)\big],~x\in \mathcal{X},
\end{equation}
where $ \mathcal{X}\ni x \mapsto V(x)$ is the value function defined in \eqref{eq_robust_problem_1},
and we define the optimal robust $Q$-value function by
\begin{equation}\label{eq_definition_qstar}
\begin{aligned}
\mathcal{X} \times A \ni  (x,a)&\mapsto Q^*(x,a):=\\
&\inf_{\PP\in \mathcal{P}(x,a)}\E_{\PP}\big[r(x,a,X_1)+\alpha  V(X_1)\big].
\end{aligned}
\end{equation}

Note that if \eqref{eq_condition_alpha} holds and $\mathcal{P}(x,a)$ is either $\mathcal{P}_1^{(q,\varepsilon)}(x,a)$ or $\mathcal{P}_2^{(q,\varepsilon)}(x,a)$ for all $(x,a) \in \X \times A$, then the values of $Q^*$ are finite, since for all $(x,a) \in \mathcal{X} \times A $ we have
\begin{equation}\label{eq_q*_finite}
\begin{aligned}
|Q^*(x,a)|\leq  &\inf_{\PP\in \mathcal{P}(x,a)}\E_{\PP}\big[|r(x,a,X_1)|+\alpha  |V(X_1)|\big] \\
&\leq \sup_{y \in \mathcal{X}}|r(x,a,y)|+\alpha  \sup_{y \in \mathcal{X}} |V(y)| < \infty,
\end{aligned}
\end{equation}
where the finiteness of $V$ follows from \cite[Theorem 2.7]{neufeld2022markov}.
Then we obtain as a consequence of the main result from \cite[Theorem 3.1]{neufeld2022markov} the following proposition showing that the infinite time horizon distributionally robust optimization problem defined in \eqref{eq_robust_problem_1} can be solved by the consideration of a suitable one time-step fixed point equation, which is the key result that allows to derive $Q$-learning type of algorithms. 

\begin{prop}\label{prop_V_equals_TV}
Assume that \eqref{eq_condition_alpha} holds and that the ambiguity set $\mathcal{P}(x,a)$ is either given by  $\mathcal{P}_1^{(q,\varepsilon)}(x,a)$ or $\mathcal{P}_2^{(q,\varepsilon)}(x,a)$ for all $(x,a) \in \X \times A$. Then for all $x\in \mathcal{X}$ we have 
$
\sup_{a \in A} Q^*(x,a)=\T V(x)=V(x),
$
where $\X \ni x \mapsto V(x)$ corresponds to the value function of the robust stochastic optimal control problem defined in \eqref{eq_robust_problem_1}.
\end{prop}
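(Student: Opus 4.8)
The plan is to deduce the statement from the dynamic programming principle for Markov decision processes under model uncertainty established in \cite[Theorem~3.1]{neufeld2022markov}, after verifying that the present setting is a special case of the framework treated there. First, observe that the identity $\sup_{a\in A}Q^*(x,a)=\T V(x)$ is immediate from the definitions: by \eqref{eq_definition_qstar} we have $Q^*(x,a)=\inf_{\PP\in\mathcal{P}(x,a)}\E_\PP[r(x,a,X_1)+\alpha V(X_1)]$ for every $(x,a)\in\X\times A$, and taking the supremum over $a\in A$ reproduces exactly the right-hand side of \eqref{eq_bellman_v}. Hence the whole \emph{content} of the proposition is the Bellman fixed-point identity $\T V=V$.

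For the latter I would check the standing/structural assumptions of \cite{neufeld2022markov} one by one. The discount condition $0<\alpha<1$ is \eqref{eq_condition_alpha}; since $\X$ is finite, $r$ is bounded and, by \cite[Theorem~2.7]{neufeld2022markov}, $V$ is real-valued and bounded, so that \eqref{eq_q*_finite} holds and all quantities are well defined. It remains to verify that the correspondence $(x,a)\mapsto\mathcal{P}(x,a)$ is admissible in the sense required there, namely non-empty valued, convex valued, compact valued, and measurable/continuous. Non-emptiness holds because $\widehat{\PP}(x,a)\in\mathcal{P}_1^{(q,\varepsilon)}(x,a)$ and, using \eqref{eq_defn_pi_P_hat_tilde}, $\widehat{\PP}(x,a)\in\mathcal{P}_2^{(q,\varepsilon)}(x,a)$. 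Convexity of $\mathcal{P}_1^{(q,\varepsilon)}(x,a)$ follows from convexity of $\PP\mapsto W_q(\PP,\widehat{\PP}(x,a))$ on $\mathcal{M}_1(\X)$, and convexity of $\mathcal{P}_2^{(q,\varepsilon)}(x,a)$ follows because it is the image of the convex $q$-Wasserstein ball $\{\widetilde{\PP}\in\mathcal{M}_1(T):W_q(\widetilde{\PP},\widehat{\widetilde{\PP}}(x,a))\le\varepsilon\}$ under the affine map $\widetilde{\PP}\mapsto\delta_{\pi(x)}\otimes\widetilde{\PP}$. Compactness is automatic: $\X$ (resp.\ $T$) being finite, $\mathcal{M}_1(\X)$ (resp.\ $\mathcal{M}_1(T)$) is a compact simplex, the Wasserstein ball is a closed subset of it and hence compact, and the affine embedding in Setting~2 is a homeomorphism onto its image. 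Finally, $\X\times A$ carries the discrete topology, so the correspondence is trivially continuous (in particular Borel measurable and upper/lower hemicontinuous), and the extrema in \eqref{eq_bellman_v} are attained by finiteness of $A$ and compactness of $\mathcal{P}(x,a)$ together with linearity, hence continuity, of $\PP\mapsto\E_\PP[r(x,a,X_1)+\alpha V(X_1)]$ on the simplex.

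With these hypotheses in place, \cite[Theorem~3.1]{neufeld2022markov} yields that the value function $V$ defined in \eqref{eq_robust_problem_1} satisfies the Bellman equation $\T V=V$. Combining this with the elementary identity $\sup_{a\in A}Q^*(x,a)=\T V(x)$ from the first step gives $\sup_{a\in A}Q^*(x,a)=\T V(x)=V(x)$ for all $x\in\X$, as claimed. I expect the only genuinely delicate point to be the matching of the two Wasserstein specifications $\mathcal{P}_1^{(q,\varepsilon)}$ and $\mathcal{P}_2^{(q,\varepsilon)}$ to the precise list of assumptions imposed in \cite{neufeld2022markov}; once the finiteness of the state and action spaces is exploited, however, this reduces to the bookkeeping above rather than to any new estimate.
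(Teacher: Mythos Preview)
Your proposal is correct and follows essentially the same route as the paper: the first identity is immediate from the definitions, and the Bellman fixed point $\T V=V$ is obtained by verifying the hypotheses of \cite[Theorem~3.1]{neufeld2022markov}. The only organizational difference is that the paper invokes \cite[Propositions~3.1 and~3.3]{neufeld2022markov} to check the ambiguity-set assumption (their Assumption~2.2) for $\mathcal{P}_1^{(q,\varepsilon)}$ and $\mathcal{P}_2^{(q,\varepsilon)}$, and then verifies the growth and Lipschitz conditions on $r$ (their Assumption~2.4) directly via finiteness of $\X\times A$, whereas you spell out non-emptiness, convexity, compactness and continuity by hand; both verifications are routine in the finite setting.
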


\section{The Robust $Q$-learning Algorithm} \label{sec_robust_q_learning}
In this section we present a novel robust $Q$-learning algorithm for the corresponding distributionally robust stochastic optimization problem~\eqref{eq_robust_problem_1} and prove its convergence.

A robust $Q$-learning algorithm intends to  approximate $Q^*(x,a)= \inf_{\PP \in \mathcal{P}(x,a)} \E_{\PP}\big[r(x,a,X_1)+\alpha  V(X_1)\big]$ which involves the minimization over an infinite amount of probability measures. Due to the particular choice of ambiguity sets \eqref{eq_def_P1} and \eqref{eq_def_P2} w.r.t.\ the Wasserstein-distance, we can transform this minimization problem into a tractable problem using a duality from, e.g.,  \cite{bartl2020computational}.

To this end, for a function $f:\mathcal{X} \rightarrow \R$ we define, as in \cite[Section 2]{bartl2020computational} or \cite[Section 5]{villani2008optimal} its $\lambda c$ - transform.

\begin{defn}[$\lambda c$-transform]\label{def_lambda_c}
Let $f:\mathcal{X} \rightarrow \R$, let $\lambda \geq 0$, and let $c: \mathcal{X}\times \mathcal{X} \rightarrow \R$. Then the $\lambda c$-transform of $f$ is defined by
$
\mathcal{X} \ni x \mapsto (f)^{\lambda c}(x):=\sup_{y \in \X} \left\{f(y)-\lambda \cdot c(x,y)\right\}.
$
\end{defn}

Indeed, the $\lambda c$-transform now allows to rephrase the optimization problem involved in the definition of $Q^*$ in more tractable terms involving only an expectation with respect to the reference kernel, compare also Proposition~\ref{prop_dual_lambda_c_transform}. We use this representation to define our \emph{robust} $Q$-learning algorithm which is summarized in Algorithm~\ref{algo_q_learning}.

\begin{algorithm}
\caption{Robust $Q$-learning}\label{algo_q_learning}
 \hspace*{\algorithmicindent} \textbf{Input} State space $\mathcal{X} \subset \mathbb{R}^d$; Control space $A \subset \mathbb{R}^m$; Reward function $r$; Discount factor $\alpha \in (0,1)$;  Kernel $\widehat{\mathbb{P}}$; Starting point $x_0$; Policy $\ab \in \mathcal{A}$; Cost function $c$ of the $\lambda c$-transform; Ambiguity parameter $\varepsilon > 0$; Parameter $q \in \mathbb{N}$ related to the Wasserstein-distance;  Sequence of learning rates $(\widetilde{\gamma}_t)_{t\in \mathbb{N}_0} \subseteq [0,1]$;
\begin{algorithmic}[1]
\State Initialize $Q_{0}(x,a)$ for all $(x,a) \in \mathcal{X} \times A$ to an arbitrary real value;
\State Initialize $\operatorname{visits}(x,a) \leftarrow 0$ for all $(x,a) \in \mathcal{X} \times A$;
\For{$t = 0,1,\cdots$}
\State Set for all $(x,a) \in \mathcal{X} \times A$:
\State 
\[
\hspace{-0.3cm}\operatorname{visits}(x,a) \gets 
\begin{cases}
\operatorname{visits}(x,a) + 1 &\text{if } (x,a)=(X_t,a_t(X_t)), \\
\operatorname{visits}(x,a) &\text{else};
\end{cases}
\]
\State Define the map:
\begin{align*}
&\gamma_t : \mathcal{X} \times A \times \mathcal{X} \rightarrow \mathbb{R},\\
&(x,a,x') \mapsto \gamma_t(x,a,x'):= \widetilde{\gamma}_{\text{visits}(x,a)} \mathbb{I}_{\{(x',a_t(x'))=(x,a)\}};
\end{align*}
\State For every $(x,a) \in \mathcal{X} \times A$ we set:
\begin{equation}\label{eq_defn_f_t}
\begin{aligned}
f_{t,(x,a)} : \mathcal{X} &\rightarrow \mathbb{R},\\
y &\mapsto r(x,a,y)+\alpha \max_{b \in A} Q_t(y,b);
\end{aligned}
\end{equation}
\State Choose $\lambda_t \in [0,\infty)$ which satisfies:
\begin{equation}
\label{eq_defn_lambda_t}
\begin{aligned}
&\mathbb{E}_{\widehat{{\PP}}(X_t,a_t(X_t))}\left[-(-f_t(X_t,a_t(X_t)))^{\lambda_t c}(X_{t+1})-\varepsilon^q \lambda_t\right]\\
&\hspace{-0.5cm}=\sup_{\lambda \geq 0} \mathbb{E}_{\widehat{{\PP}}(X_t,a_t(X_t))}\left[-(-f_t(X_t,a_t(X_t)))^{\lambda c}(X_{t+1})-\varepsilon^q \lambda\right];
\end{aligned}
\end{equation}
\State For all $(x,a) \in \mathcal{X} \times A$ we define the following update rule:
\begin{equation}
\label{eq_q_learning_c_transform}
\begin{aligned}
Q_{t+1}(x,a):&= Q_{t}(x,a)\\
 &+ \gamma_t(x,a,X_t) \cdot \bigg(-(-f_{t,(x,a)})^{\lambda_t c}(X_{t+1})\\
 &\hspace{3cm}-\varepsilon^q\lambda_t-Q_{t}(x,a)\bigg);
\end{aligned}
\end{equation}
\EndFor
\end{algorithmic}
 \hspace*{\algorithmicindent} \textbf{Output} A sequence $(Q_{t}(x,a))_{t\in \mathbb{N}_0,~x \in \mathcal{X},~a \in A}$


\end{algorithm}

The update rule from \eqref{eq_q_learning_c_transform} in Algorithm~\ref{algo_q_learning} means that for all $(x,a)\in \mathcal{X} \times A$, $t\in \N_0$, we have
$Q_{t+1}(x,a) =Q_{t}(x,a) + \widetilde{\gamma}_{\operatorname{visits}(x,a)} \bigg(-(-f_{t,(x,a)})^{\lambda_tc}(X_{t+1})  -\varepsilon^q\lambda_t-Q_t(x,a)\bigg)$ if $(x,a)=(X_t,a_t(X_t))$ and $Q_{t+1}(x,a) =Q_{t}(x,a)$ else,
i.e., the update of $Q_{t+1}$ only takes that state-action pair into account which was realized by the process $(X_t)_{t\in \N}$. Further, note that Algorithm~\ref{algo_q_learning} assumes for each time $t\in \N_0$ the existence of some $\lambda_t \in [0, \infty)$ such that \eqref{eq_defn_lambda_t} holds. The following result ensures that this requirement is indeed fulfilled.
\begin{lem}\label{lem_choice_of_lambda}
Let $(x,a) \in \X \times A$, $t \in \N_0$, let $\widehat{\PP} \in \mathcal{M}_1(\mathcal{X})$ and recall $\mathcal{X} \ni y \mapsto f_{t,(x,a)}(y)$ defined in \eqref{eq_defn_f_t}. Further let $\X \times \X \ni (x,y) \mapsto c(x,y) \in [0, \infty]$ satisfy $\min_{y \in \X } c(x,y)= 0 $ for all $x \in \X$. Then, there exists some $\lambda^* \in [0, \infty)$ such that 
$
\E_{\widehat{\PP}}\left[-(-f_{t,(x,a)})^{\lambda^* c}(X_{1})-\varepsilon^q \lambda^*\right]=\sup_{\lambda \geq 0} \left(\E_{\widehat{\PP}}\left[-(-f_{t,(x,a)})^{\lambda c}(X_{1})-\varepsilon^q \lambda\right]\right).
$
\end{lem}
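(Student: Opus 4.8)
The plan is to turn the statement into a one–dimensional concave optimisation problem and to conclude by a Weierstrass–type compactness argument. Fix $(x,a)\in\X\times A$, $t\in\N_0$, the measure $\widehat{\PP}\in\mathcal{M}_1(\X)$ and the cost $c$ as in the statement, and abbreviate $f:=f_{t,(x,a)}$, which by \eqref{eq_defn_f_t} is a bounded real function on the finite set $\X$. Consider
\[
\Phi(\lambda):=\E_{\widehat{\PP}}\!\left[-(-f)^{\lambda c}(X_{1})-\varepsilon^{q}\lambda\right],\qquad\lambda\ge 0 ,
\]
so that the claim is exactly that $\Phi$ attains its supremum on $[0,\infty)$. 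Unravelling Definition~\ref{def_lambda_c} gives the identity $-(-f)^{\lambda c}(y)=\inf_{z\in\X}\{f(z)+\lambda\,c(y,z)\}$, hence
\[
\Phi(\lambda)=\sum_{y\in\X}\widehat{\PP}(\{y\})\,\inf_{z\in\X}\bigl\{f(z)+\lambda\,c(y,z)\bigr\}-\varepsilon^{q}\lambda .
\]

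Next I would record the structural properties of $\Phi$. For each fixed $y\in\X$ the map $\lambda\mapsto\inf_{z\in\X}\{f(z)+\lambda c(y,z)\}$ is an infimum of finitely many affine functions of $\lambda$ with nonnegative slopes $c(y,z)$; it is therefore concave (and nondecreasing), and forming the convex combination with weights $\widehat{\PP}(\{y\})$ and subtracting the linear term $\varepsilon^{q}\lambda$ keeps $\Phi$ concave on $[0,\infty)$. The hypothesis $\min_{z\in\X}c(y,z)=0$ then enters twice. First, choosing for every $y$ some $z_{y}\in\X$ with $c(y,z_{y})=0$ yields the squeeze
\[
\min_{z\in\X}f(z)\ \le\ \inf_{z\in\X}\bigl\{f(z)+\lambda c(y,z)\bigr\}\ \le\ f(z_{y})\ \le\ \max_{z\in\X}f(z)\qquad(\lambda\ge0),
\]
so every summand stays in a fixed compact interval; hence $\Phi$ is real valued and continuous on $(0,\infty)$ (continuity at $\lambda=0$ holds as well for finite-valued $c$ such as $c(y,z)=\|y-z\|^{q}$; in the general extended-real-valued case one works with the natural convention for the value of the transform at $\lambda=0$). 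Second, the same upper bound gives the coercivity estimate
\[
\Phi(\lambda)\ \le\ \max_{z\in\X}f(z)-\varepsilon^{q}\lambda ,
\]
which tends to $-\infty$ as $\lambda\to\infty$ because $\varepsilon>0$ and $q\in\N$, so that $\varepsilon^{q}>0$. (In fact, for $\lambda$ large the infimum in each summand is attained at a point of zero $c$-cost, so $\Phi$ is eventually affine with slope $-\varepsilon^{q}$.)

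Finally I would conclude by compactness: since $\Phi\colon[0,\infty)\to\R$ is upper semicontinuous and $\Phi(\lambda)\to-\infty$ as $\lambda\to\infty$, the superlevel set $\{\lambda\ge0:\Phi(\lambda)\ge\Phi(0)\}$ is nonempty, closed and bounded, hence compact; on it the upper semicontinuous $\Phi$ attains a maximum, which necessarily equals $\sup_{\lambda\ge0}\Phi(\lambda)$. Any maximiser $\lambda^{*}\in[0,\infty)$ is the point required by the lemma, and the concavity of $\Phi$ additionally shows that the choice of $\lambda_t$ in \eqref{eq_defn_lambda_t} is a one–dimensional concave program. The step I expect to be the crux is the coercivity estimate: it is precisely the normalisation $\min_{z}c(y,z)=0$ together with $\varepsilon>0$ that forces $\Phi(\lambda)\to-\infty$ (without it the dual functional need not be coercive), and the only genuinely delicate bookkeeping is the regularity of $\Phi$ at the boundary $\lambda=0$ when $c$ is permitted to take the value $+\infty$.
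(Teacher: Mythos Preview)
Your proposal is correct and follows essentially the same route as the paper: both arguments reduce to showing that the dual functional $\lambda\mapsto\E_{\widehat{\PP}}[-(-f)^{\lambda c}(X_1)]-\varepsilon^q\lambda$ is continuous on $[0,\infty)$ and tends to $-\infty$ as $\lambda\to\infty$ via the normalisation $\min_z c(y,z)=0$, whence the supremum is attained by a Weierstrass argument. Your additional observation that $\Phi$ is concave (as a convex combination of infima of finitely many affine functions, minus a linear term) is a pleasant structural remark that the paper omits, and your explicit flagging of the $\lambda=0$ boundary issue when $c$ is extended-real-valued is more careful than the paper, which simply asserts continuity ``since $\X$ is finite'' without addressing the $0\cdot\infty$ convention.
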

The following main result now shows that the function $(Q_{t})_{t\in \N_0}$ obtained as the output of  Algorithm~\ref{algo_q_learning} converges indeed against the optimal robust $Q$-value function $Q^*$ defined in \eqref{eq_definition_qstar}.

\begin{thm}\label{thm_q_learning_wasserstein}
Assume that \eqref{eq_condition_alpha} holds, and let $(x_0,\ab) \in \mathcal{X} \times \mathcal{A}$ such that
\begin{equation}\label{eq_gamma_infinity_conditions_2}
\begin{aligned}
&\sum_{t=1}^\infty \gamma_t(x,a,X_t)=\infty, ~\sum_{t=1}^\infty \gamma_t^2(x,a,X_t)<\infty\\
&\text{ for all } (x,a)\in \mathcal{X} \times A \qquad \widehat{\PP}_{x_0,\ab}-\text{almost surely}.
\end{aligned}
\end{equation}
\begin{itemize}
\item[(i)] Let the ambiguity set be given by $\mathcal{P}(x,a)= \mathcal{P}_1^{(q,\varepsilon)}(x,a)$ for all $(x,a) \in \X \times A$ for some $\varepsilon>0$ and $q\in \N$, and consider\footnote{The function $c_1$ is used to determine the $\lambda c$-transform in the algorithm, see \eqref{eq_defn_lambda_t} and \eqref{eq_q_learning_c_transform}.} $c_1: \mathcal{X}\times  \mathcal{X} \ni (x,y) \mapsto \|x-y\|^q$. Then, we have for all $(x,a) \in \mathcal{X} \times A$ that
\begin{equation*}
\lim_{t\rightarrow \infty } Q_{t}(x,a) = Q^*(x,a) \qquad \widehat{\PP}_{x_0,\ab}-\text{almost surely.\footnotemark}
\end{equation*}
\footnotetext{{For the definition of the probability measure $\widehat{\PP}_{x_0,\ab}$  we refer to \eqref{eq_defn_P_MDP}.}}
\item[(ii)]
Let $\X = T^h$ for some $h \in \N \cap [2,\infty)$ and $T \subset \R^D$ finite for some $D \in \N$, let the ambiguity set be given by $\mathcal{P}(x,a) = \mathcal{P}_2^{(q,\varepsilon)}(x,a)$ for all $(x,a) \in \X \times A$ for some $\varepsilon>0$ and $q\in \N$, and consider\footnote{The function $c_2$ is used to determine the $\lambda c$-transform in the algorithm, see \eqref{eq_defn_lambda_t} and \eqref{eq_q_learning_c_transform}.} 
$
c_2: \mathcal{X}\times  \mathcal{X} \ni (x,y) \mapsto \infty \cdot \one_{\{(x_1,\dots,x_{h-1})\neq (y_1,\dots,y_{h-1})\}}(x,y)+\|x_h-y_h\|^q$,
where $(x,y) =\left((x_1,\dots,x_h),(y_1,\dots,y_h) \right)$.
Then, we have for all $(x,a) \in \mathcal{X} \times A$ that
\[
\lim_{t\rightarrow \infty } Q_{t}(x,a) = Q^*(x,a) \qquad \widehat{\PP}_{x_0,\ab}-\text{almost surely.}
\]
\end{itemize}
\end{thm}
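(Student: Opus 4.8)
The proof follows the classical template for proving convergence of $Q$-learning via stochastic approximation (as in \cite{jaakkola1994convergence,watkins1992q}), the only genuinely new ingredient being a strong-duality identity that identifies the conditional mean of the update target in \eqref{eq_q_learning_c_transform} with a robust Bellman operator applied to $Q_t$. The first step is therefore to record, as an auxiliary result, the Wasserstein-duality formula
\[
\inf_{\PP \in \mathcal P(x,a)}\E_{\PP}\big[g(X_1)\big]=\sup_{\lambda\ge 0}\Big(\E_{\widehat{\PP}(x,a)}\big[-(-g)^{\lambda c_i}(X_1)\big]-\varepsilon^q\lambda\Big),\qquad g:\X\to\R \text{ bounded},
\]
with $i=1$ under \eqref{eq_def_P1} and $i=2$ under \eqref{eq_def_P2}. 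For $\mathcal P_1^{(q,\varepsilon)}$ with $c_1(x,y)=\|x-y\|^q$ this is the finite-space instance of the duality of \cite{bartl2020computational,blanchet2019quantifying,gao2022distributionally,mohajerin2018data,zhao2018data}, and the supremum is attained by Lemma~\ref{lem_choice_of_lambda}. For $\mathcal P_2^{(q,\varepsilon)}$ one notes that $c_2(x,\cdot)=+\infty$ off the set $\{y:(y_1,\dots,y_{h-1})=(x_2,\dots,x_h)\}$, so that $(-g)^{\lambda c_2}(x)$ depends only on the last coordinate; since every $\PP\in\mathcal P_2^{(q,\varepsilon)}(x,a)$ has the form $\delta_{\pi(x)}\otimes\widetilde\PP$ and $\widehat{\PP}(x,a)=\delta_{\pi(x)}\otimes\widehat{\widetilde\PP}(x,a)$, the identity reduces to the $c_1$-duality on $\mathcal M_1(T)$ applied to $y_h\mapsto g(\pi(x),y_h)$, which is the content of part (ii).

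\textbf{The robust Bellman operator and its fixed point.} Define $\mathbf H$ on $\R^{\X\times A}$ by $(\mathbf H Q)(x,a):=\inf_{\PP\in\mathcal P(x,a)}\E_{\PP}[\,r(x,a,X_1)+\alpha\max_{b\in A}Q(X_1,b)\,]$. Since $|\max_b Q(y,b)-\max_b Q'(y,b)|\le\|Q-Q'\|_\infty$ and both $\PP\mapsto\E_{\PP}[\cdot]$ and the infimum are nonexpansive, $\mathbf H$ is an $\alpha$-contraction on $(\R^{\X\times A},\|\cdot\|_\infty)$; as $\X\times A$ is finite and $\alpha<1$ it has a unique fixed point, and by Proposition~\ref{prop_V_equals_TV} (giving $V=\sup_a Q^*(\cdot,a)$) together with \eqref{eq_definition_qstar} one checks $\mathbf H Q^*=Q^*$, so this fixed point is $Q^*$, which is finite by \eqref{eq_q*_finite}. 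Next I recast the algorithm. Let $(\mathcal F_t)_t$ be the filtration generated by $X_0,\dots,X_t$; then $Q_t$, $\operatorname{visits}(\cdot,\cdot)$, $\gamma_t$ and $\lambda_t$ are $\mathcal F_t$-measurable (crucially, $\lambda_t$ is selected from the $X_{t+1}$-averaged criterion \eqref{eq_defn_lambda_t} and hence does not depend on $X_{t+1}$). Writing $\Delta_t:=Q_t-Q^*$ and, on the event $(X_t,a_t(X_t))=(x,a)$, $G_t(x,a):=-(-f_{t,(x,a)})^{\lambda_t c_i}(X_{t+1})-\varepsilon^q\lambda_t$, the update \eqref{eq_q_learning_c_transform} reads $\Delta_{t+1}(x,a)=(1-\gamma_t(x,a,X_t))\Delta_t(x,a)+\gamma_t(x,a,X_t)(G_t(x,a)-Q^*(x,a))$, while all other components are unchanged because $\gamma_t(x',a',X_t)=0$ for $(x',a')\ne(X_t,a_t(X_t))$ by \eqref{eq_defn_gamma_t}. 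Under $\widehat\PP_{x_0,\ab}$ one has $X_{t+1}\mid\mathcal F_t\sim\widehat{\PP}(X_t,a_t(X_t))$, so on $\{(X_t,a_t(X_t))=(x,a)\}$ the optimality of $\lambda_t$ in \eqref{eq_defn_lambda_t} and the duality of Step~1 applied to $g=f_{t,(x,a)}$ give $\E[G_t(x,a)\mid\mathcal F_t]=(\mathbf H Q_t)(x,a)$, hence $\E[G_t(x,a)-Q^*(x,a)\mid\mathcal F_t]=(\mathbf H Q_t)(x,a)-(\mathbf H Q^*)(x,a)$, of modulus at most $\alpha\|\Delta_t\|_\infty$.

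\textbf{A priori bounds and conclusion.} With $C_r:=\max_{(x,a,y)}|r(x,a,y)|$ one has $\|f_{t,(x,a)}\|_\infty\le C_r+\alpha\|Q_t\|_\infty$; using $\min_y c_i(x,y)=0$ one gets the elementary bounds $-\|g\|_\infty\le-(-g)^{\lambda c_i}(\cdot)\le\|g\|_\infty$, and comparing the value of the criterion in \eqref{eq_defn_lambda_t} at $\lambda_t$ with its value at $\lambda=0$ yields $\varepsilon^q\lambda_t\le 2\|f_{t,(x,a)}\|_\infty$. Therefore $|G_t(x,a)|\le 3(C_r+\alpha\|Q_t\|_\infty)\le 3C_r+3\alpha\|Q^*\|_\infty+3\alpha\|\Delta_t\|_\infty$, so the martingale-difference noise $G_t(x,a)-\E[G_t(x,a)\mid\mathcal F_t]$ has conditional variance at most $C(1+\|\Delta_t\|_\infty)^2$ for a finite constant $C$. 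Together with the step-size hypothesis \eqref{eq_gamma_infinity_conditions_2}, these are exactly the hypotheses of the asynchronous stochastic-approximation theorem for contractive iterations (cf.\ \cite{jaakkola1994convergence}), which yields $\|\Delta_t\|_\infty\to0$, i.e.\ $Q_t(x,a)\to Q^*(x,a)$ for every $(x,a)$, $\widehat\PP_{x_0,\ab}$-almost surely; parts (i) and (ii) then differ only in whether the $c_1$- or the $c_2$-duality of Step~1 is invoked.

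\textbf{Main obstacle.} The delicate point is the conditional-mean computation: one must ensure that $\E[G_t(x,a)\mid\mathcal F_t]$ equals $(\mathbf H Q_t)(x,a)$ on the update event, which rests on $\lambda_t$ being $\mathcal F_t$-measurable and independent of $X_{t+1}$ (true by its definition via the averaged criterion \eqref{eq_defn_lambda_t}), on strong duality with attainment (Lemma~\ref{lem_choice_of_lambda} and Step~1), and on the observation that using the single multiplier $\lambda_t$ — optimal only for the realized pair $(X_t,a_t(X_t))$ — in \eqref{eq_q_learning_c_transform} is harmless, since $\gamma_t(x,a,X_t)=0$ for every other pair. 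A secondary technicality is the uniform-in-$t$ control of $\varepsilon^q\lambda_t$ needed for the variance bound, and, for part (ii), the careful identification of $(-g)^{\lambda c_2}$ with a one-dimensional $\lambda c_1$-transform in the last coordinate together with the tensor structure $\delta_{\pi(x)}\otimes\,\cdot\,$ of the measures in $\mathcal P_2^{(q,\varepsilon)}$.
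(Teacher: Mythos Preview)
Your proposal is correct and follows essentially the same route as the paper: both reduce convergence to the asynchronous stochastic-approximation lemma (the paper cites it as \cite[Lemma~1]{singh2000convergence}, you cite \cite{jaakkola1994convergence}), both identify the conditional mean of the update on the active pair with $(HQ_t)(x,a)$ via the Wasserstein duality of \cite{bartl2020computational} (the paper's Proposition~\ref{prop_dual_lambda_c_transform}) together with the optimality of $\lambda_t$, and both then invoke the $\alpha$-contraction of $H$ and $HQ^*=Q^*$ to get the drift bound $\alpha\|\Delta_t\|_\infty$.

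The only notable difference is in the conditional-variance estimate. The paper exploits that $\varepsilon^q\lambda_t$ and $Q^*(X_t,a_t(X_t))$ are $\mathcal G_t$-measurable, so the conditional variance of $F_t$ equals that of $(-f_{t})^{\lambda_t c}(X_{t+1})$ alone; it then derives pointwise upper and lower bounds on this quantity (shifted by a constant) of the form $\pm(C_r+\alpha\|\Delta_t\|_\infty)+\text{const}$ and applies Popoviciu's inequality---no bound on $\lambda_t$ is ever needed. You instead bound $|G_t|$ directly, which forces you to control $\varepsilon^q\lambda_t$; your argument that comparing the value at $\lambda_t$ with the value at $\lambda=0$ gives $\varepsilon^q\lambda_t\le 2\|f_{t,(x,a)}\|_\infty$ is correct and arguably more elementary than the paper's Popoviciu route. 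Both approaches deliver the required bound $\operatorname{Var}(F_t\mid\mathcal G_t)\le C(1+\|\Delta_t\|_\infty)^2$.
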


\begin{rem}
Note that condition \eqref{eq_gamma_infinity_conditions_2} can be ensured by considering a sequence of learning rates $(\widetilde{\gamma}_t)_{t\in \N_0} \subseteq [0,1]$ satisfying
\begin{equation}\label{eq_gamma_infinity_conditions}
\sum_{t=0}^\infty \widetilde{\gamma}_t=\infty, \qquad \sum_{t=0}^\infty \widetilde{\gamma}_t^2<\infty,
\end{equation}
and $(X_t)_{t\in \N_0}$ is a (positive) recurrent irreducible Markov decision process under $\widehat{\PP}_{x_0,\ab}$.
\end{rem}
\begin{rem}\label{rem_eps_greedy}
Note that in the non-robust case it has been empirically shown that an efficient choice for $\ab \in \mathcal{A}$ when applying $Q$-learning is given by the so called \emph{$\widetilde \varepsilon$-greedy policy}, see e.g. \cite[Chapter 9]{dixon2020machine}, \cite{mnih2015human}, or \cite{tokic2011value}. The \emph{$\widetilde \varepsilon$-greedy policy} $\ab:=(a_1,a_2,\dots)\in \mathcal{A}$ is, for $\widetilde \varepsilon>0$, $t\in \N_0$, defined by
\[
\mathcal{X} \ni x \mapsto a_t(x) :=\begin{cases}
\operatorname{argmax}_{b \in B} Q_t(x,b)&\text{prob. } 1- \widetilde \varepsilon, \\
a \sim \mathcal{U}(A) &\text{prob. } \widetilde \varepsilon,
\end{cases}
\]
where $a \sim \mathcal{U}(A)$ means that a random action $a$ is  chosen uniformly at random from the finite set $A$. A popular modification of the \emph{$\widetilde \varepsilon$-greedy policy} is to start with a relatively large $\widetilde \varepsilon$ and to decrease the value of $\widetilde \varepsilon$ over time, see, e.g., \cite{mnih2015human}.
\end{rem}

\begin{rem}\label{rem_action_from_q}
Note that from the optimal $Q$-value function one can infer 
$\X \ni x \mapsto \aloc^*(x):= \operatorname{argmax}_{a \in A} Q^*(x,a)$ and 
$\ab^*:= (\aloc^*(X_0),\aloc^*(X_1),\dots) \in \mathcal{A}$ which solves the robust stochastic optimal control problem \eqref{eq_robust_problem_1}, compare Proposition~\ref{prop_V_equals_TV} and \cite[Theorem 2.7]{neufeld2022markov}. Analogously, by considering $\X \ni x \mapsto \operatorname{argmax}_{a \in A}Q_t (x,a)$ for a sufficiently large $t \in \N$, we can derive an approximation of the optimal action.
\end{rem}

The following result based on \cite{neufeld2023bounding} shows that whenever an agent possesses a good enough guess about the true (but to her unknown) probability kernel $\PP^{\operatorname{true}}(x,a)$ so that  it is contained in the ambiguity set, one can 
bound the difference of the values of the robust and non-robust Markov decision problems. 
This is important since $\lim_{t\rightarrow \infty } Q_{t}(x,a) = Q^*(x,a) \ \widehat{\PP}_{x_0,\ab}-\text{a.s.\ }$ and $\sup_{a \in A} Q^*(x,a)=V(x)$, hence the following result also provides an upper bound on the sub-optimality of the performance of our robust $Q$-learning algorithm. We see that it can be controlled to be arbitrarily small when $\varepsilon\to 0$, as long as the agent possesses a good enough guess for 
	$\PP^{\operatorname{true}}(x,a)$ as discussed above. 
Note that compared to \cite{neufeld2023bounding}, no regularity assumptions on the map $(x,a)\mapsto  \PP^{\operatorname{true}}(x,a)$ nor on the reward function are necessary due to the finiteness of both the state and action space.

\begin{prop}\label{prop_bounds}
Let $\varepsilon>0$, $q\in \N$, and let \begin{equation}\label{eq_non_robust_problem_1}
\begin{aligned}
   \X \ni x \mapsto V^{\operatorname{true}}(x):=\sup_{\ab \in \mathcal{A}} \left(\E_{{\PP}^{\operatorname{true}}_{x,\ab}}\bigg[\sum_{t=0}^\infty \alpha^tr(X_{t},a_t,X_{t+1})\bigg]\right),
\end{aligned}
\end{equation}
with $$\PP_{x,\ab}^{\operatorname{true}}:=\delta_{x} \otimes \PP^{\operatorname{true}} \otimes \PP^{\operatorname{true}} \otimes \PP^{\operatorname{true}} \otimes \PP^{\operatorname{true}}  \cdots \in \mathcal{M}_1(\Omega),$$
where $\X \times A \ni(x,a) \mapsto \PP^{\operatorname{true}}(x,a) \in  \mathcal{P}_i^{(q,\varepsilon)}(x,a)$, $i \in \{1,2\}$, for all $x\in\mathcal{X}, a \in A$.  Moreover, assume that
the discount factor satisfies (2.2) as well as $\alpha L_P <1$, where
\begin{equation}\label{Lp}
L_P:= \sup_{(x,a), (x’,a’)\in \mathcal{X}\times A: \atop  (x,a)\neq (x’,a’)} 
\frac{W_q \left(\PP^{\operatorname{true}}(x,a), \PP^{\operatorname{true}}(x’,a’)\right)}{\Vert x-x’ \Vert + \Vert a-a’ \Vert}.
\end{equation}

Then for any $x \in \X$ we have
\begin{equation}\label{eq_bound_main_thm}
0 \leq V^{\operatorname{true}}(x)-V(x) \leq 2 L_r \varepsilon \left(1+\alpha\right)\sum_{i=0}^\infty \alpha^i \sum_{j=0}^i(L_P)^j < \infty,
\end{equation}
where 
\begin{equation}\label{Lr}
L_r:= \sup_{\substack{(x_0,a,x_1)\in \mathcal{X}\times A\times \mathcal{X},\\ (x’_0,a’,x’_1)\in \mathcal{X}\times A\times \mathcal{X}: \\ (x_0,a,x_1)\neq  (x’_0,a’,x’_1) } }
\frac{ |r(x_0,a,x_1)- r(x’_0,a’,x’_1)| }{\Vert x_0-x_0’ \Vert + \Vert a-a’ \Vert+\Vert x_1-x_1’ \Vert }.
\end{equation}
\end{prop}

\section{Numerical Examples}\label{sec_examples}
In this section we provide three numerical examples that illustrate how the robust $Q$-learning Algorithm~\ref{algo_q_learning} can be applied to specific problems. The examples highlight that a distributionally robust approach can outperform non-robust approaches whenever the assumed underlying distribution of the non-robust Markov $Q$-learning approach turns out to be misspecified during the testing period.

The selection of examples in this section is intended to give a small impression on the broad range of different applications of $Q$-learning algorithms for stochastic optimization problems.
 We refer to  \cite{bauerle2011markov},  \cite{dixon2020machine}, and \cite{hambly2021recent} for an overview on several applications in finance and 
to \cite{sutton2018reinforcement} for a range of applications outside the world of finance.

\subsection{On the Implementation}
To apply  the numerical method from Algorithm~\ref{algo_q_learning}, we use for all of the following examples a
discount factor of $\alpha = 0.45$, an $\widetilde \varepsilon$-greedy policy with $\widetilde \varepsilon=0.1$ (compare Remark~\ref{rem_eps_greedy}), $q = 1$, and as a sequence of learning rates we use $\widetilde{\gamma}_t = \tfrac{1}{1+t}$ for $t\in \N_0$. Moreover, we train all implementations with $50\,000$ iterations. The parameter $\lambda_t$ from \eqref{eq_defn_lambda_t} is determined by maximizing the right-hand-side of \eqref{eq_defn_lambda_t} with a numerical solver relying on the Broyden–-Fletcher–-Goldfarb–-Shanno (BFGS) algorithm (\cite{broyden1970convergence}, \cite{fletcher1970new}, \cite{goldfarb1970family}, \cite{shanno1970conditioning}).
Further details of the implementation can be found under \href{https://github.com/juliansester/Wasserstein-Q-learning}{https://github.com/juliansester/Wasserstein-Q-learning}.
\subsection{Examples}

\begin{exa}[Coin Toss]\label{exa_coins}
We consider an agent playing the following game:
At each time $t\in \N_0$ the agent observes the result of $10$ coins that  either show heads (encoded by $1$) or tails (encoded by $0$). The state $X_t$ at time $t\in \N_0$ is then given by the sum of the heads observed in the $10$ coins, i.e., we have $\mathcal{X}:=\{0,\dots,10\}$. At each time $t$ the agent can bet whether the sum of the heads of the next throw strictly exceeds the previous sum (i.e. $X_{t+1}>X_t$), or whether it is strictly smaller (i.e. $X_{t+1}<X_t$).

If the agent is correct, she gets $1$ $\$$, if the agent is wrong she has to pay $1$ $\$$. The agent also has the possibility not to play. We model this by considering the reward function:
$
\X \times A \times \X \ni (x,a,x') \mapsto r(x,a,x'):=a\one_{\{x<x'\}}-a\one_{\{x>x'\}}-|a|\one_{\{x=x'\}},
$
where the possible actions are given by $ A:=\{-1,0,1\}$, where for example $a=1$ corresponds to betting $X_{t+1}>X_t$.
We then rely on Setting 1.) from Section~\ref{sec_ambiguity_set} and consider as a reference measure a binomial distribution with $n=10,p=0.5$, i.e., 
$\X \times A \ni (x,a) \mapsto \widehat{\PP}(x,a):= \operatorname{Bin}(10,0.5).$
We then define, according to Setting 1.) from Section~\ref{sec_ambiguity_set}, an ambiguity set, in dependence of $\varepsilon>0$, by 
\begin{equation}\label{eq_defn_ambiguity_set_epsilon}
	\begin{aligned}
		\mathcal{P}(x,a):=\!\bigg\{\PP \in \mathcal{M}_1(\X)~\text{ s.t. }\ 
		W_1\bigg(\PP,\operatorname{Bin}(10,0.5)\bigg)\leq \varepsilon \bigg\}
	\end{aligned}
\end{equation}
for every  $(x,a)\in \X \times A$.
Let $p \in [0,1]$. Then, we denote the cumulative distribution function of a $B(10,p)$-distributed random variable by $F_{10,p}$. Then we compute for the $1$-Wasserstein distance that
\begin{equation}\label{eq_wasserstein_binom}
\begin{aligned}
&W_1\bigg(\operatorname{Bin}(10,0.5),~\operatorname{Bin}(10,p)\bigg) \\&= \int_{\R}\left| F_{10,0.5}(x)-F_{10,p}(x) \right| dx \\
& = \int_0^\infty F_{10,\min\{p,0.5\}}(x)-F_{10,\max\{0.5,p\}}(x) dx \\
& = \int_{0}^\infty \left(1-F_{10,\max\{0.5,p\}}(x)\right) dx \\
&\hspace{1cm}-\int_{0}^\infty \left(1-F_{10,\min\{0.5,p\}}(x)\right) dx \\
&=10\cdot \max\{0.5,p\}-10\cdot\min\{0.5,p\}=10\cdot |0.5-p|,
\end{aligned}
\end{equation}
where the first equality of \eqref{eq_wasserstein_binom} follows e.g.\  from \cite[Equation (3.5)]{ruschendorf2007monge} and the second equality of \eqref{eq_wasserstein_binom} follows since $F_{10,\min\{0.5,p\}}(x) \geq F_{10,\max\{0.5,p\}}(x)$ for all $x\in \R$.
This means that all binomial distributions $\operatorname{Bin}(10,p)$ with $p \in \left[0.5-\tfrac{\varepsilon}{10},~0.5+\tfrac{\varepsilon}{10}\right]$ are contained in the ambiguity set\footnote{We highlight that of course the ambiguity set not only contains binomial distributions.}. The calculation from \eqref{eq_wasserstein_binom} gives a good indication how choosing a different value of $\varepsilon$ may influence the measures contained in the ambiguity set. We then train actions $\ab^{\operatorname{robust},\varepsilon} = (a_t^{\operatorname{robust},\varepsilon})_{t \in \N_0}\in \A$ according to the robust $Q$-learning approach proposed in Algorithm~\ref{algo_q_learning} for different values of $\varepsilon$, compare also Remark~\ref{rem_action_from_q}. Additionally we train an action $\ab^{\operatorname{non-robust}} =(a_t^{\operatorname{non-robust}})_{t \in \N_0}\in \A$ according to the \emph{classical} non-robust $Q$-learning approach, see, e.g., \cite{watkins1989learning}, where we assume that the underlying process $(X_t)_{t \in \N_0}$ develops according to the reference measure $\widehat{\PP}$. We obtain after applying Algorithm~\ref{algo_q_learning} the strategies depicted in Table~\ref{tbl_actions_coints}.
\begin{table}[h!]
\begin{center}{
\resizebox{\columnwidth}{!}{
\begin{tabular}{c|ccccccccccc} \toprule
 $X_t$    & 0 &1 &2 &3 &4 &5 &6 &7 &8 &9 &10   \\
\midrule
 $a_t^{\operatorname{non-robust}}(X_t)$ &1 	&1 	&1 	&1 	&1 	&0 	&-1 	&-1 	&-1 	&-1 &-1
 \\
$a_t^{\operatorname{robust},\varepsilon=0.5}(X_t)$ &1 	&1 	&1 	&0 	&0 	&0 	&0 &0	&-1 	&-1 	&-1 
 \\
 $a_t^{\operatorname{robust},\varepsilon=1}(X_t)$ &1 	&1 	&0 	&0 	&0 	&0 	&0 &0	&0 	&-1 	&-1 
 \\
 $a_t^{\operatorname{robust},\varepsilon=2}(X_t)$ &0 	&0 	&0 	&0 	&0 	&0 	&0 &0	&0 	&0	&0
 \\
 \bottomrule
\end{tabular}}}
\end{center}
\caption{The trained actions $a_t^{\operatorname{robust},\varepsilon=0.5}(X_t)$, $a_t^{\operatorname{robust},\varepsilon=1}(X_t)$, $a_t^{\operatorname{robust},\varepsilon=2}(X_t)$, and $a_t^{\operatorname{non-robust}}(X_t)$ in dependence of the realized state $X_t$ at time $t \in \N_0$.}\label{tbl_actions_coints}
\end{table}
In particular, we see that in comparison with the \emph{non-robust} action $\ab^{\operatorname{non-robust}}$, the \emph{robust} actions $\ab^{\operatorname{robust},\varepsilon}$ behave more carefully where a larger value of $\varepsilon$ corresponds to a more careful behavior, which can be clearly seen for $\varepsilon=2$, in which case the agent decides not to play for every realization of the state.

Then, we test the profit of the resultant actions $\ab^{\operatorname{robust},\varepsilon} $ and $\ab^{\operatorname{non-robust}}$ by playing $100\,000$ rounds of the game according to these actions. For simulating the  $100\,000$ rounds we assume an underlying binomial distribution $\PP_{\operatorname{true}}=\operatorname{Bin}(10,p_{\operatorname{true}})$ with a fix probability $p_{\operatorname{true}}$ for heads which we vary from $0.1$ to $0.9$.
We depict the cumulated profits of the considered actions in Table~\ref{tbl_toin_coss}.
\begin{table}[h!]
\begin{center}{\resizebox{\columnwidth}{!}{
\begin{tabular}{l|ccccccccc} \toprule
$p_{\operatorname{true}}$   	&0.1  	&0.2  	&0.3 	&0.4 	&0.5 	&0.6 	&0.7 		&0.8  	&0.9 \\ \midrule
Non-Robust 	&-31386 	&-18438 	&-1567 &\textbf{22892} 		&\textbf{35082}  	&\textbf{22956} 	&-656 	&-18374 	&-31091 \\ 
Robust, $\varepsilon = 0.5$ 	&-24728 &4554  	&\textbf{16491} 	&13323 	&9920  	&13170 		&\textbf{16825} 	&4451 	 	&-24427\\
Robust, $\varepsilon = 1$ 	&-8174 	&\textbf{15201} 	&11091	&4387  	&2050 	&4373 		&11139 	&\textbf{15276}  	&-7611 \\
Robust, $\varepsilon = 2$ &\textbf{0} 	&0 	&0 	&0 	&0 	&0	&0 	&0 	 	&\textbf{0} \\
\bottomrule
\end{tabular}}}
\end{center}
\caption{Overall Profit of the game described in Example~\ref{exa_coins} in dependence of different trained strategies and of the probability distribution $\PP_{\operatorname{true}}=\operatorname{Bin}(10,p_{\operatorname{true}})$ of the underlying process. The best performing strategy in each case is indicated with bold characters.}\label{tbl_toin_coss}
\end{table}
We observe that if the probability for heads $p_{\operatorname{true}}$ is similar as probability for heads in the reference measure ($p=0.5$), then the non-robust approach (w.r.t.\ $\widehat{\PP}(x,a):= \operatorname{Bin}(10,0.5)$) outperforms the robust approaches. If however the model with which the non-robust action was trained was clearly misspecified then  $\ab^{\operatorname{robust},\varepsilon}$ outperforms $\ab^{\operatorname{non-robust}}$. More precisely, the larger the degree of misspecification the more favorable it becomes to choose a larger $\varepsilon$.
This can be well explained by the choice of the ambiguity set that covers, according to \eqref{eq_wasserstein_binom}, the more measures under which we test, the larger we choose $\varepsilon$. 

This simple example showcases that if in practice one is uncertain about the correct law according to which the state process evolves and one faces the risk of misspecifying the probabilities, then it can be advantageous to rely on a distributionally robust approach, whereas the choice of the radius of the Wasserstein-ball is extremely important as it corresponds to the degree of misspecification one wants to be robust against.
\end{exa}

\begin{exa}[Comparison with KL-Uncertainty]\label{exa_KL_vs_Wasserstein}
We reconsider an example of a supply-chain model provided in \cite[Section 4]{liu2022distributionally}. In this example we have for some $n \in \N$ the state space $\X = \{0,1,\dots,n\}$ representing the possible goods in the inventory and the action space $A = \X$ representing the possible goods we can order. The reward function is defined as the negative of the costs that are composed of holding costs and fixed ordering costs depending on parameters $h,p,k \in \R$ and on the demand which is, for the reference measure, uniformly distributed on $\X$ , see \cite[Section 4]{liu2022distributionally} for more details.

In the setting described in \cite[Section 4]{liu2022distributionally}, the optimal non-robust strategy (w.r.t.\ the reference measure) given current number of goods $x$ is $a_t^{\operatorname{non-robust}}(x)= (8-x) \cdot \one_{\{x \leq 2\}}$ while we compute for a Wasserstein-uncertainty parameter $\varepsilon = 1$ an optimal robust strategy $a_t^{\operatorname{Wasserstein}}(x)= (8-x) \cdot \one_{\{x \leq 1\}} +5 \one_{\{x \in \{2,3\} \}}$. The robust strategy computed in \cite[Section 4]{liu2022distributionally} that takes uncertainty w.r.t.\,Kullback--Leibler distance in account is given by $a_t^{\operatorname{KL}}(x)= (7-x) \cdot \one_{\{x \leq 4\}}$.

As in \cite[Figure 1]{liu2022distributionally}, we evaluate the strategies on  a distribution which does not coincide with the reference measure. To this end, we follow the example from \cite[Section 4]{liu2022distributionally} and consider a perturbed uniform distribution depending on parameters $m$ and $b$.
With parameter $b=1$ we compute after evaluation on $100~000$ iterations the  costs depicted in Figure~\ref{fig_KL_vs_Wasserstein}, in dependence of the parameter $m$. The figure shows that for this particular example the Wasserstein approach leads for all values  that are considered, except for $m \in \{5,6\}$, to smaller costs than the approach provided in \cite[Section 4]{liu2022distributionally}. Moreover, since  the true distribution does not coincide with the reference distribution, the robust strategies can outperform the non-robust ones (defined w.r.t.\ the reference distribution). 

\begin{figure}
\centering
\includegraphics[scale=0.48]{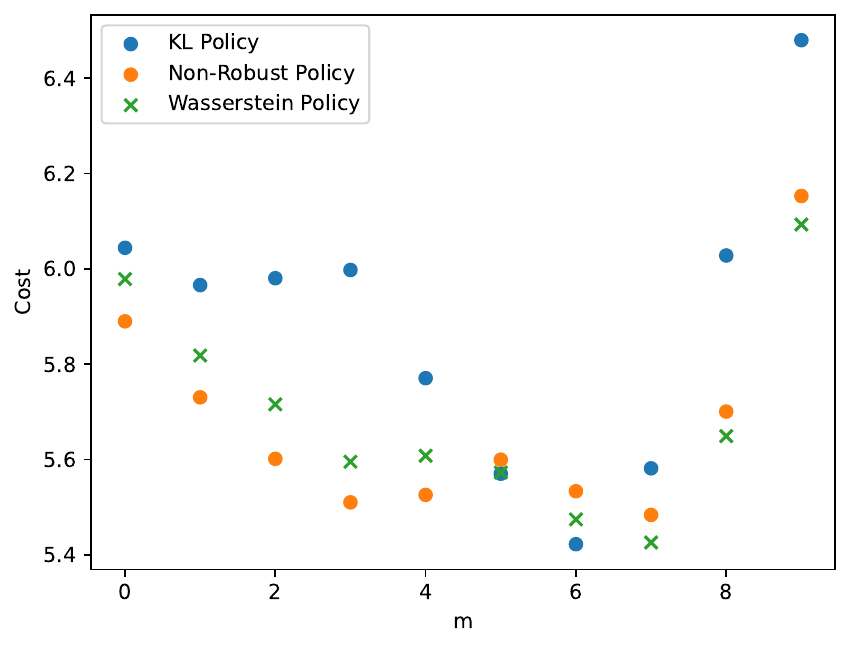}
\caption{ Total Costs for $b =1$ after $100 000$ iterations in the setting of Example~\ref{exa_KL_vs_Wasserstein}, compare also \cite[Figure 1]{liu2022distributionally}.} \label{fig_KL_vs_Wasserstein}
\end{figure}

\end{exa}

\begin{exa}[Stock Movement Prediction]
We study the problem of predicting the movement of stock prices. We aim to predict whether in the next time step the return of an underlying stock is strongly negative (encoded by $-2$), slightly negative (encoded by $-1$), slightly positive (encoded by $1$), or strongly positive (encoded by $2$). Hence the space of the numerically encoded returns is given by
$
T:= \{-2,-1,1,2\}.
$
We want to rely our prediction for the movement of the next return on the last $h=5$ values. Hence, we consider, in line with the setting outlined in \eqref{eq_x_equals_Y}
$
\X:=T^h=\{-2,-1,1,2\}^5.
$
The space of actions is modelled by 
$
A:= \{-2,-1,1,2\}=T
$
as the actions correspond to the future returns that are predicted. To construct a reference measure, we consider the historic evolution of the (numerically encoded) returns of the underlying stock. This time series is denoted by $(\mathcal{R}_{j})_{j=1,\dots,N} \subset T^N$ for some $N \in \N$, see also Figure~\ref{fig_dates} for an illustration.
\begin{figure}[h!]
\begin{center}
\begin{tikzpicture}[%
    every node/.style={
        font=\scriptsize,
        text height=1ex,
        text depth=.25ex,
    },
]
\draw[->] (0,0) -- (8.5,0);

\foreach \x in {0,1,...,8}{
    \draw (\x cm,3pt) -- (\x cm,0pt);
}

\node[anchor=north] at (0,0) {$\mathcal{R}_1$};
\node[anchor=north] at (1,0) {$\mathcal{R}_2$};
\node[anchor=north] at (2,0) {$\cdots$};
\node[anchor=north] at (3,0) {$\mathcal{R}_N$};
\node[anchor=north] at (4,0) {$\cdots$};
\node[anchor=north] at (5,0) {Today};
\node[anchor=north] at (6,0) {};
\node[anchor=north] at (7,0) {$\cdots$};
\node[anchor=north] at (8,0) {};

\fill[lightgray] (0,0.05) rectangle (3,0.2);
\fill[darkgray] (6,0.05) rectangle (8,0.2);

\draw[decorate,decoration={brace,amplitude=5pt}] (0,0.25) -- (3,0.25)
    node[anchor=south,midway,above=4pt] {Observed Returns};
\draw[decorate,decoration={brace,amplitude=5pt}] (6,0.25) -- (8,0.25)
    node[anchor=south,midway,above=4pt] {Future Returns};
\end{tikzpicture}
\end{center}
\caption{Illustration of the time relation between the time series of observed returns $(\mathcal{R}_j)_{j=1,\dots,N}$ and the future returns which we want to predict.}\label{fig_dates}
\end{figure}

We then define for some small\footnote{Note that $\gamma$ is only introduced to avoid a division by $0$.} $\gamma>0$ the set-valued map
$
\X \times A \ni (x,a)\mapsto \widehat{\widetilde{\PP}}(x,a):=\sum_{i\in T} p_i(x) \cdot  \delta_{\{i\}} \in \mathcal{M}_1(T)
$
where for $x\in \X, i \in T$ we define
\begin{equation}\label{eq_exa_stock_pi}
p_i(x):= \frac{\widetilde{p}_i(x)+\tfrac{\gamma}{4}}{\gamma+\sum_{j\in T} \widetilde{p}_j(x)} \in [0,1],
\end{equation}
as well as\footnote{Note that $\pi$ is defined in \eqref{eq_pi_k}.}
\begin{equation}\label{eq_exa_stock_pitilde}
\widetilde{p}_i(x):= \sum_{j=1}^{N-h+1} \one_{\{(\pi(x),i)=(\mathcal{R}_j,\dots,\mathcal{R}_{j+h-1})\}}.
\end{equation}
This means the construction of $\widehat{\widetilde{\PP}}(x,a)$ relies, according to \eqref{eq_exa_stock_pitilde}, on the relative frequency of the sequence $(\pi(x),i)$ in the time series of past realized returns $(\mathcal{R}_{j})_{j=1,\dots,N}$. Equation \eqref{eq_exa_stock_pi} is then applied to convert the frequencies to probabilities.
Then, as a reference measure we consider, as in \eqref{eq_defn_pi_P_hat_tilde}, the set-valued map
\begin{equation}
\X \times A \ni (x,a)\mapsto \widehat{\PP}(x,a)=\delta_{\pi(x)} \otimes \widehat{\widetilde{\PP}}(x,a) \in \mathcal{M}_1(\X).
\end{equation}
Moreover, as a reward function we consider\footnote{Here $x':=(x'_1,\dots,x'_h) \in \X$, and hence, $x'_h$ denotes the last component of $x'$.}
$
\X \times A \times \X \ni (x,a,x') \mapsto r(x,a,x'):= \one_{\{x'_h=a\}},
$
i.e., we reward only correct predictions.
We apply the setting described above to real data. To this end, we consider as series of realized returns $(\mathcal{R}_j)_{j=1,\dots,N}$ the daily returns of the stock of \emph{Apple} in the time horizon from $1$ January $2010$ until $28$ September $2018$ and hence we take into account $N=2200$ daily returns.
To encode the observed market returns to values in $T$, we distinguish between small returns and large returns by saying that a daily return is strongly positive if it is larger than $0.01$. Analogously a daily return is strongly negative if smaller than $-0.01$. This leads to the  distribution of  returns as depicted in Table~\ref{tbl_return_training_set}.

\begin{table}[h!]
\begin{center}{\resizebox{\columnwidth}{!}{
\begin{tabular}{lc} \toprule
Type of Encoded Return~(Numerical Value)& Total Amount\\ \midrule
Strongly Negative Returns~(-2)   &404\\
Slightly Negative Returns~(-1) &637\\
Slightly Positive Returns~(1) &627\\
Strongly Positive Returns~(2)  &532\\
\bottomrule
\end{tabular}}}
\end{center}
\caption{The distribution of the numerically encoded daily returns of \emph{Apple} between  January $2010$ and September $2018$. The threshold to distinguish slightly positive (negative) returns from strongly positive returns is $0.01$ ($-0.01$).}\label{tbl_return_training_set}
\end{table}

We then train a non-robust action $\ab^{\operatorname{non-robust}} =(a_t^{\operatorname{non-robust}})_{t \in \N_0}\in \A$ according to the \emph{classical} non-robust $Q$-learning algorithm (\cite{watkins1992q}) as well as robust actions $\ab^{\operatorname{robust}} = (a_t^{\operatorname{robust}})_{t \in \N_0}\in \A$ according to Algorithm~\ref{algo_q_learning} that takes into account an ambiguity set defined in \eqref{eq_def_P2} with $\varepsilon=0.1$. Moreover, for comparison, we consider a \emph{trivial} action  $\ab^{\operatorname{trivial}} = (a_t^{\operatorname{trivial}})_{t \in \N_0}\in \A$ which always, independent of the state-action pair, predicts $-1$ since, according to Table~\ref{tbl_return_training_set},  $-1$ is the most frequent appearing value in the time series $(\mathcal{R}_j)_{j=1,\dots,N}$ .

We then evaluate the trained actions, in a small backtesting study, on realized daily returns of \emph{Apple} that occurred after the end of the training period. To this end, we consider an evaluation period from $1$ October 2018 until $26$ February $2019$ consisting of $100$ daily returns that are distributed according to Table~\ref{tbl_return_evaluation_set}.
\begin{table}[h!]
\begin{center}{\resizebox{\columnwidth}{!}{
\begin{tabular}{lc} \toprule
Type of Encoded Return~(Numerical Value) & Total Amount\\ \midrule
Strongly Negative Returns~(-2)   &29\\
Slightly Negative Returns~(-1) &21\\
Slightly Positive Returns~(1) &22\\
Strongly Positive Returns~(2)   &28\\
\bottomrule
\end{tabular}}}
\end{center}
\caption{The distribution of the numerically encoded daily returns of \emph{Apple} between  $1$ October $2018$ and $26$ February $2019$.}\label{tbl_return_evaluation_set}
\end{table}
We observe that in the evaluation period, in contrast to the training period, the large negative returns impose the largest class of appearing returns. Overall the distribution is significantly different from the distribution of the classes on the training data. We illustrate in Table~\ref{tbl_results_evaluation_stocks} the results of predictions of the actions $\ab^{\operatorname{trivial}},\ab^{\operatorname{non-robust}},  \ab^{\operatorname{robust}}$ evaluated in the evaluation period, and we observe that indeed the robust action $\ab^{\operatorname{robust}}$ outperforms the other two actions clearly in this period where the distribution of returns significantly differs from the distributions of the returns on which the actions were trained.
\begin{table}[h!]
\begin{center}{{
\begin{tabular}{lc} \toprule
Action & Share of Correct Predictions\\ \midrule
$\ab^{\operatorname{non-robust}}$       &23.40$\%$ \\
$ \ab^{\operatorname{robust}}$     &28.72$\%$ \\
$\ab^{\operatorname{trivial}}$          &21.27$\%$   \\
\bottomrule
\end{tabular}}}
\end{center}
\caption{The proportion of correct stock movement predictions in the evaluation period between  $1$ October $2018$ and $10$ January $2019$}\label{tbl_results_evaluation_stocks}
\end{table}
This showcases again that if there is the risk that the underlying distribution on which the actions were trained turns out to be misspecified, then it can be advantageous to use a robust approach.
\end{exa}

\begin{ack}                               
Financial support by the MOE AcRF Tier 1 Grant \emph{RG74/21} and by the  Nanyang Assistant Professorship Grant (NAP Grant) \emph{Machine Learning based Algorithms in Finance and Insurance} is gratefully acknowledged.   
\end{ack}

\bibliographystyle{plain}        
\bibliography{literature}
\appendix

\section{Auxiliary Results and Proofs}
In Section~\ref{sec_auxiliary_results} we provide several useful results which then allow in Section~\ref{sec_proofs} to prove the main result from Section~\ref{sec_robust_q_learning}.

\subsection{Auxiliary Results}\label{sec_auxiliary_results}
To establish convergence of our $Q$-learning algorithm that was presented in Section~\ref{sec_robust_q_learning} we will make use of the following auxiliary result from stochastic approximation theory which was developed to prove the convergence of the classical $Q$-learning algorithm. We refer to \cite[Section~3]{jaakkola1994convergence} for a discussion of the advantage of the following result compared to classical results from stochastic approximation such as, e.g., \cite{dvoretzky1956stochastic}.
 Note that for any $f: \mathcal{X} \times A \rightarrow \R$, we write 
\begin{equation}\label{eq_defn_supremum_norm}
\|f \|_{\infty}:= \sup_{x\in \mathcal{X}} \sup_{a\in \A} |f(x,a)|.
\end{equation}
\begin{lem}[\cite{jaakkola1994convergence}, Theorem 1] \label{lem_convergence}
Let $\PP_0 \in \mathcal{M}_1(\Omega)$ be a probability measure on $(\Omega, \mathcal{F})$, and consider a family of stochastic processes $(\gamma_t(x,a),F_t(x,a),\Delta_t(x,a))_{t\in \N_0}$, $(x,a)\in \X \times A$, satisfying for all $t \in \N_0$
\[
\Delta_{t+1}(x,a)=\left(1-\gamma_t(x,a)\right){\Delta}_t(x,a)+\gamma_t(x,a)F_t(x,a)
\]
$\PP_0\text{-almost surely for all } (x,a) \in \X \times A$.
Let $(\mathcal{G}_t)_{t\in \N_0}\subseteq \mathcal{F}$ be a sequence of increasing $\sigma$-algebras such that for all $(x,a) \in \X \times A$ the random variables $\Delta_0(x,a)$ and $\gamma_0(x,a)$ are $\mathcal{G}_0$-measurable and such that
$\Delta_t(x,a)$, $\gamma_t(x,a)$, and $F_{t-1}(x,a)$ are $\mathcal{G}_t$-measurable for all $t\in \N$.
Further assume that the following conditions hold.
\begin{itemize}
\item[(i)] $0 \leq \gamma_t(x,a ) \leq 1$, $\sum_{t=0}^\infty \gamma_t(x,a )= \infty$, $\sum_{t=0}^\infty \gamma_t^2(x,a) <\infty$ $\PP_0$-almost surely for all $(x,a) \in \X \times A$, $ t\in \N_0$.
\item[(ii)] There exists $\delta \in (0,1)$ such that $\|\E_{\PP_0}\left[F_t (\cdot ,\ \cdot )~\middle|~\mathcal{G}_t\right]\|_\infty\leq \delta \|\Delta_t\|_\infty$ $\PP_0$-almost surely for all $t\in \N_0$.
\item[(iii)] There exists $C>0$ such that $ \left\|\operatorname{Var}_{\PP_0}\left(F_t(\cdot ,\ \cdot  )~\middle|~\mathcal{G}_t\right) \right\|_{\infty}\leq C(1+\|\Delta_t\|_\infty)^2$ $\PP_0$-almost surely for all $ t\in \N_0$.
\end{itemize}
Then, $\lim_{t \rightarrow \infty} \Delta_t(x,a) =0 ~~ \PP_0$-almost surely for all $(x,a) \in \X \times A$.
\end{lem}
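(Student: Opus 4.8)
The plan is to run the classical asynchronous stochastic-approximation argument: split the increment $F_t$ into its conditional mean and a martingale-difference noise, show that the accumulated noise is asymptotically negligible, and then exploit the strict sup-norm contraction in (ii) through an iterated-shrinking (sandwiching) argument forcing $\|\Delta_t\|_\infty\to 0$. Concretely, I would write $F_t(x,a)=\E_{\PP_0}[F_t(x,a)\mid\mathcal G_t]+G_t(x,a)$, where $G_t(x,a):=F_t(x,a)-\E_{\PP_0}[F_t(x,a)\mid\mathcal G_t]$ is, for each $(x,a)$, a martingale-difference sequence with respect to $(\mathcal G_t)_{t\in\N_0}$, so that
\[
\Delta_{t+1}(x,a)=(1-\gamma_t(x,a))\Delta_t(x,a)+\gamma_t(x,a)\,\E_{\PP_0}[F_t(x,a)\mid\mathcal G_t]+\gamma_t(x,a)\,G_t(x,a).
\]
Here the drift obeys $|\E_{\PP_0}[F_t(x,a)\mid\mathcal G_t]|\le\delta\,\|\Delta_t\|_\infty$ by (ii), while the noise obeys $\Var_{\PP_0}(G_t(x,a)\mid\mathcal G_t)=\Var_{\PP_0}(F_t(x,a)\mid\mathcal G_t)\le C(1+\|\Delta_t\|_\infty)^2$ by (iii).

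\textbf{Noise control.} For each fixed $(x,a)$ and starting time $\tau$ I would introduce the auxiliary process $w_\tau(x,a):=0$, $w_{t+1}(x,a):=(1-\gamma_t(x,a))w_t(x,a)+\gamma_t(x,a)G_t(x,a)$, a gain-weighted, geometrically discounted sum of martingale differences. Computing $\E_{\PP_0}[w_{t+1}(x,a)^2\mid\mathcal G_t]$ and using $0\le\gamma_t\le1$, the square-summability $\sum_t\gamma_t^2<\infty$, and the variance bound, a Robbins--Siegmund type almost-supermartingale argument shows that, on any event where $\sup_t\|\Delta_t\|_\infty<\infty$, one has $w_t(x,a)\to0$ $\PP_0$-almost surely. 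The point to watch is that (iii) allows the conditional variance to grow quadratically in $\|\Delta_t\|_\infty$, so this step genuinely requires boundedness of the iterate as an input.

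\textbf{Boundedness and iterated shrinking.} I would first establish $\sup_t\|\Delta_t\|_\infty<\infty$ $\PP_0$-almost surely: since the drift is a strict contraction by $\delta<1$ while the aggregated noise is controlled by the square-summable gains, a scaling/stability argument prevents $\|\Delta_t\|_\infty$ from escaping to infinity. Writing $B_0:=\sup_t\|\Delta_t\|_\infty$, I then fix $\varepsilon>0$ with $\delta+\varepsilon<1$ and unroll the recursion from a time $\tau$ with $\|\Delta_s\|_\infty\le B_0$ for all $s\ge\tau$: for $t\ge\tau$,
\[
|\Delta_{t+1}(x,a)|\le\prod_{s=\tau}^{t}(1-\gamma_s(x,a))\,|\Delta_\tau(x,a)|+\delta B_0\Big(1-\prod_{s=\tau}^{t}(1-\gamma_s(x,a))\Big)+|w_{t+1}(x,a)|.
\]
Because $\sum_s\gamma_s=\infty$ forces $\prod_{s=\tau}^{t}(1-\gamma_s)\to0$, the first term vanishes, the middle term tends to $\delta B_0$, and the noise term vanishes by the previous step; hence $\limsup_t\|\Delta_t\|_\infty\le(\delta+\varepsilon)B_0$. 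Iterating with $B_{k+1}:=(\delta+\varepsilon)B_k$ yields $\limsup_t\|\Delta_t\|_\infty\le(\delta+\varepsilon)^kB_0$ for every $k\in\N$, and since $\delta+\varepsilon<1$ this gives $\Delta_t(x,a)\to0$ $\PP_0$-almost surely.

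\textbf{Main obstacle.} The crux is the boundedness step: condition (iii) permits the conditional variance to grow like $\|\Delta_t\|_\infty^2$, so the noise cannot be tamed a priori, and one must break the circular dependence between controlling the iterate and controlling the noise — typically via a normalized/rescaled comparison process — before the clean contraction estimate above can be applied.
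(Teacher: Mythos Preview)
The paper does not prove this lemma at all: it is stated as an auxiliary result quoted verbatim from \cite{singh2000convergence}, Lemma~1 (see also the related references \cite{jaakkola1994convergence}, \cite{szepesvari1996generalized}, \cite{van2007convergence} listed there), and is used as a black box in the proof of Theorem~\ref{thm_q_learning_wasserstein}. So there is no ``paper's own proof'' to compare against.

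That said, your sketch is the standard argument underlying those references: the drift/martingale decomposition, the rescaled-noise process $w_t$, the Robbins--Siegmund control of the noise, and the iterated shrinking via $\prod(1-\gamma_s)\to 0$ are exactly the ingredients in Jaakkola--Jordan--Singh and in Singh et al. You have also correctly flagged the genuine subtlety, namely that boundedness of $\|\Delta_t\|_\infty$ must be obtained \emph{before} the noise can be declared negligible, because condition~(iii) only bounds the conditional variance by $C(1+\|\Delta_t\|_\infty)^2$; in the cited proofs this circularity is broken by a rescaling argument (normalizing by a running bound on $\|\Delta_t\|_\infty$) rather than by an a~priori estimate. Your outline is correct in spirit and matches the literature; a fully rigorous write-up would need to spell out that rescaling step carefully.
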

Next, as the following proposition shows, the $\lambda c$-transform  allows to compute worst case expectations with respect to probability measures contained in the Wasserstein-ball by computing its dual which solely depends on the center of the Wasserstein-ball.
\begin{prop}[\cite{bartl2020computational}, Theorem 2.4]\label{prop_dual_lambda_c_transform}
Let $f:\mathcal{X} \rightarrow \R$, let $\varepsilon>0$ and $q\in \N$, let $\mathcal{P}_1^{(q,\varepsilon)}, \mathcal{P}_2^{(q,\varepsilon)}$ be the ambiguity sets of probability measures defined in \eqref{eq_def_P1} and \eqref{eq_def_P2}, and let $c_1,c_2: \X \times \X \rightarrow [0,\infty]$ be defined as in Theorem~\ref{thm_q_learning_wasserstein}.
\begin{itemize}
\item[(i)]
Then, we have for every $(x,a) \in \mathcal{X} \times A$ that
$$
\inf_{\PP \in\mathcal{P}_1^{(q,\varepsilon)}(x,a)} \E_\PP[f]= \sup_{\lambda \geq 0}\left(\E_{\widehat{\PP}(x,a)}[-(-f)^{\lambda c_1}]-\varepsilon^q \lambda\right).
$$
\item[(ii)]
In addition, let $\X = T^h$ for some $h \in \N\cap [2,\infty)$, and $T \subset \R^D$ finite for some $D \in \N$. Moreover, assume that there exists some probability kernel $\X \times A \ni (x,a) \mapsto \widehat{\widetilde{\PP}}(x,a) \in \mathcal{M}_1(T)$ such  for all $(x,a) \in\X \times A$ we have  $\widehat{\PP}(x,a) = \delta_{\pi(x)}\otimes \widehat{\widetilde{\PP}}(x,a)$. Then, we have for every $(x,a) \in \mathcal{X} \times A$ that
$
\inf_{\PP \in \mathcal{P}_2^{(q,\varepsilon)}(x,a)} \E_\PP[f]= \sup_{\lambda \geq 0}\left( \E_{\delta_{\pi(x)}\otimes \widehat{\widetilde{\PP}}(x,a)}[-(-f)^{\lambda c_2}]-\varepsilon^q \lambda\right).
$
\end{itemize}

\end{prop}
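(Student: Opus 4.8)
The plan is to establish (i) as a finite-dimensional strong-duality statement and then deduce (ii) from (i) by a reduction to the terminal coordinate. Throughout I exploit that $\X$ (respectively $T$) is finite, so every measure is an element of a simplex and every optimal transport problem is a finite linear program.

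For (i) I would first eliminate the variable measure $\PP$ from the worst-case expectation. Using the coupling formulation of $W_q$ and substituting the marginal identity $\PP(\{z\}) = \sum_y \pi_{z,y}$, the left-hand side becomes
\begin{equation*}
\inf_{\PP \in \mathcal{P}_1^{(q,\varepsilon)}(x,a)} \E_\PP[f] = \min_{\pi \ge 0}\left\{\sum_{z,y \in \X} f(z)\, \pi_{z,y} \ : \ \sum_{z} \pi_{z,y} = \widehat{\PP}(x,a)(\{y\}) \ \forall y, \ \sum_{z,y} \|z-y\|^q \pi_{z,y} \le \varepsilon^q\right\},
\end{equation*}
a linear program in the coupling $\pi$ whose second marginal is fixed to $\widehat{\PP}(x,a)$ while its first marginal is free. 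I would then form the Lagrangian, attaching a free multiplier $\phi_y$ to each marginal constraint and a multiplier $\lambda \ge 0$ to the transport-budget constraint, and invoke strong LP duality. Finiteness of the inner minimization over $\pi \ge 0$ forces $\phi_y \le \inf_{z}(f(z) + \lambda\|z-y\|^q) = -(-f)^{\lambda c_1}(y)$, and optimizing $\phi$ at this bound yields precisely $\sup_{\lambda \ge 0}(\E_{\widehat{\PP}(x,a)}[-(-f)^{\lambda c_1}] - \varepsilon^q\lambda)$. Alternatively, since this is exactly the framework of \cite[Theorem 2.4]{bartl2020computational}, one may simply verify its hypotheses (lower semicontinuity and a growth bound on $f$, both automatic on a finite space) and invoke it directly.

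For (ii) the key observation is that every $\PP \in \mathcal{P}_2^{(q,\varepsilon)}(x,a)$ has the rigid product form $\delta_{\pi(x)}\otimes\widetilde{\PP}$, so writing $g(t) := f(\pi(x), t)$ for $t \in T$ the worst-case expectation collapses to $\inf_{\widetilde{\PP}: W_q(\widetilde{\PP}, \widehat{\widetilde{\PP}}(x,a)) \le \varepsilon} \E_{\widetilde{\PP}}[g]$, a worst-case expectation over a Wasserstein ball on the smaller space $T$. Applying part (i) on $T$ with cost $\tilde{c}(s,t) := \|s-t\|^q$ gives the dual $\sup_{\lambda \ge 0}(\E_{\widehat{\widetilde{\PP}}(x,a)}[-(-g)^{\lambda\tilde{c}}] - \varepsilon^q\lambda)$. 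It then remains to identify this with the claimed $c_2$-transform on $\X$: for any $y$ in the support of $\delta_{\pi(x)}\otimes\widehat{\widetilde{\PP}}(x,a)$, i.e. $y = (\pi(x), s)$ with $s \in T$, the infinite penalty in $c_2$ forces the supremum defining $(-f)^{\lambda c_2}(y)$ to range only over competitors sharing the first $h-1$ coordinates of $y$, whence $(-f)^{\lambda c_2}(y) = \sup_{t \in T}\{-f(\pi(x),t) - \lambda\|s-t\|^q\} = (-g)^{\lambda\tilde{c}}(s)$. Taking expectations under $\delta_{\pi(x)}\otimes\widehat{\widetilde{\PP}}(x,a)$ then matches the two dual expressions.

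The main obstacle I anticipate is the justification of the minimax interchange, i.e. the absence of a duality gap. On a finite state space this is clean: feasibility holds because $\varepsilon>0$ makes $\widehat{\PP}(x,a)$ itself admissible with zero transport cost, and boundedness of $f$ guarantees a finite optimal value, so strong LP duality applies. The remaining care is purely bookkeeping — correctly eliminating $\PP$ in favour of the coupling, tracking which marginal is held fixed, and verifying in (ii) that the infinite-cost structure of $c_2$ genuinely confines the transform to the terminal coordinate so that the reduction to part (i) is exact.
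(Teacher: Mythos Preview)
Your proposal is correct. For part~(i) it essentially coincides with the paper: you even note that one may simply invoke \cite[Theorem~2.4]{bartl2020computational}, which is precisely what the paper does (with $d_{c_1}=W_q^q$ and the indicator penalty of \cite[Example~2.5]{bartl2020computational}); your self-contained finite LP-duality argument is a nice elaboration that the paper omits. For part~(ii) your route genuinely differs. You reduce the worst-case expectation to a Wasserstein ball on the terminal space $T$, apply part~(i) there with cost $\tilde c(s,t)=\|s-t\|^q$, and then identify the resulting $\lambda\tilde c$-transform on $T$ with the $\lambda c_2$-transform on $\X$ through the infinite penalty in $c_2$. The paper instead applies \cite[Theorem~2.4]{bartl2020computational} a second time, directly on $\X$ with the cost $c_2$, after verifying that $d_{c_2}(\PP,\delta_{\pi(x)}\otimes\widehat{\widetilde{\PP}}(x,a))\le\varepsilon^q$ holds if and only if $\PP\in\mathcal{P}_2^{(q,\varepsilon)}(x,a)$. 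Your reduction makes the role of the terminal coordinate more explicit and reuses (i) rather than re-citing; the only care needed is at $\lambda=0$, where the convention $0\cdot\infty=0$ lets the $c_2$-transform range over all of $\X$ rather than the fibre over $\pi(x)$, so the pointwise identification $(-f)^{\lambda c_2}(y)=(-g)^{\lambda\tilde c}(s)$ may fail there---but this is harmless, since the value at $\lambda=0$ is dominated by the $\lambda\to 0^+$ limit and hence does not affect the supremum. The paper's approach is more uniform (one duality theorem, applied twice) and avoids this bookkeeping, at the cost of checking that $c_2$ is an admissible cost and computing $d_{c_2}$.
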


\textbf{Proof of Proposition~\ref{prop_dual_lambda_c_transform}}\\
In case (i), the assertion follows by an application of the duality result from \cite[Theorem 2.4]{bartl2020computational} (with the specifications $d_{c_1}(\cdot,\cdot):= W_q(\cdot,\cdot)^q$, $\varphi(\cdot) := \infty\one_{(\varepsilon,
\infty]}(\cdot)$ in the notation of \cite[Theorem 2.4]{bartl2020computational}, see also  \cite[Example 2.5]{bartl2020computational}). More precisely, by \cite[Theorem 2.4]{bartl2020computational}, \cite[Example 2.5]{bartl2020computational} and by the definition of $\mathcal{P}_1^{(q,\varepsilon)}$ we have  for all $(x,a) \in \mathcal{X} \times A$ that
\begin{align*}
\inf_{\PP \in \mathcal{P}_1^{(q,\varepsilon)}(x,a)} \E_\PP[f]&= -\sup_{ \left\{\PP \in \mathcal{M}_1(\X)~\middle|~W_q\left(\PP, \widehat{\PP}(x,a)\right) \leq \varepsilon \right\}}\E_\PP[-f]\\
&= - \left(\inf_{\lambda \geq 0}\left\{\E_{\widehat{\PP}(x,a)}[(-f)^{\lambda c_1}]+\varepsilon^q \lambda \right\}\right)\\
&= \sup_{\lambda \geq 0} \left\{\E_{\widehat{\PP}(x,a)}[-(-f)^{\lambda c_1}]-\varepsilon^q \lambda \right\}.
\end{align*}
To show (ii), we observe that in the notation of \cite{bartl2020computational}, we have for $\PP,\PP'\in \mathcal{M}_1(\mathcal{X})$ that
\begin{align*}
d_{c_2}(\PP,\PP') &:= \inf_{\Q \in \Pi(\PP,\PP')} \int_{\X \times \X}  c_2(x,y)\D \Q(x,y)\\
&\hspace{-1cm}=\inf_{\Q \in \Pi(\PP,\PP')} \int_{\X \times \X}  \big(\infty \cdot \one_{\{(x_1,\dots,x_{h-1})\neq (y_1,\dots,y_{h-1})\}}\\
&+\|x_h-y_h\|^q \big)\D \Q \big((x_1,\dots,x_h),(y_1,\dots,y_h)\big).
\end{align*}
Hence, we have
$d_{c_2}(\PP,\delta_{\pi(x)} \otimes \widehat{\widetilde{\PP}}(x,a)) \leq \varepsilon^q$ if and only if $\PP =\delta_{\pi(x)} \otimes \widehat{\widetilde{\PP}}$ for some $\widehat{\widetilde{\PP}} \in \mathcal{M}_1(T)$ with $W_q(\widetilde{\PP},\widetilde{\PP}(x,a))\leq \varepsilon$. Moreover, we see that $c_2$ is indeed a cost function in the sense of \cite{bartl2020computational}.
This implies by \cite[Theorem 2.4]{bartl2020computational} and by the definition of $\mathcal{P}_2^{(q,\varepsilon)}$ that for all $(x,a) \in \X \times A$ we have
\begin{align*}
&\inf_{\PP \in \mathcal{P}_2^{(q,\varepsilon)}(x,a)} \E_\PP[f]\\&= -\sup_{\left\{\PP \in \mathcal{M}_1(\mathcal{X})~\middle|~d_{c_2} \left(\PP, \delta_{\pi(x)} \otimes \widehat{\widetilde{\PP}}(x,a)\right) \leq \varepsilon^q  \right\}}\E_\PP[-f]\\
&= - \left(\inf_{\lambda \geq 0}\left\{\E_{\delta_{\pi(x)} \otimes \widehat{\widetilde{\PP}}(x,a)}[(-f)^{\lambda c_2}]+\varepsilon^q \lambda \right\}\right)\\ &= \sup_{\lambda \geq 0} \left\{\E_{\delta_{\pi(x)} \otimes \widehat{\widetilde{\PP}}(x,a)}[-(-f)^{\lambda c_2}]-\varepsilon^q \lambda \right\}. \qed
\end{align*}

Next, consider the operator $H$ which is defined for any  $q: \mathcal{X}\times A \rightarrow \R$ by
\begin{equation}\label{eq_defn_HQ}
\begin{aligned}
\mathcal{X} \times A \ni (x,a) \mapsto &(Hq)(x,a):=\\
&\hspace{-1cm}\inf_{\PP \in \mathcal{P}(x,a)}\E_{\PP}\big[r(x,a,X_1)+\alpha  \max_{b\in A}q(X_1,b)\big].
\end{aligned}
\end{equation}
We derive for $H$ the following form of the Bellman-equation.
\begin{lem}\label{lem_bellman_q}
Assume that \eqref{eq_condition_alpha} holds and let the ambiguity set $\mathcal{P}$ be either $\mathcal{P}_1^{(q,\varepsilon)}$ or $\mathcal{P}_2^{(q,\varepsilon)}$, defined in \eqref{eq_def_P1} and \eqref{eq_def_P2}. Then the following equation holds true for the optimal $Q$-value function defined in \eqref{eq_definition_qstar}:
$
HQ^*(x,a)=Q^*(x,a)\text{ for all } (x,a)\in \mathcal{X} \times A.
$
\end{lem}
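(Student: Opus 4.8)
The plan is to show that the operator $H$ defined in \eqref{eq_defn_HQ} is essentially the same fixed-point operator as $\T$ acting on value functions, evaluated at the optimal value function $V$, and then to invoke Proposition~\ref{prop_V_equals_TV} together with the definition of $Q^*$. First I would unwind the definition: by \eqref{eq_definition_qstar} we have $Q^*(x,a)=\inf_{\PP\in\mathcal P(x,a)}\E_\PP[r(x,a,X_1)+\alpha V(X_1)]$, and the key observation is that $\sup_{b\in A}Q^*(x,b)=\T V(x)=V(x)$ by Proposition~\ref{prop_V_equals_TV}. Hence $\max_{b\in A}Q^*(y,b)=V(y)$ for every $y\in\X$, since $A$ is finite so the supremum is attained.

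Next I would substitute this identity into the definition of $H$: for any $(x,a)\in\X\times A$,
\[
HQ^*(x,a)=\inf_{\PP\in\mathcal P(x,a)}\E_\PP\big[r(x,a,X_1)+\alpha\max_{b\in A}Q^*(X_1,b)\big]=\inf_{\PP\in\mathcal P(x,a)}\E_\PP\big[r(x,a,X_1)+\alpha V(X_1)\big]=Q^*(x,a),
\]
where the middle equality uses $\max_{b\in A}Q^*(y,b)=V(y)$ pointwise in $y$, and the last equality is just the definition \eqref{eq_definition_qstar}. This closes the proof. One should check that the expectation $\E_\PP[r(x,a,X_1)+\alpha V(X_1)]$ is well-defined and finite for every $\PP\in\mathcal P(x,a)$, which follows because $\X$ is finite (so $r(x,a,\cdot)$ and $V$ are bounded on $\X$), and the finiteness of $V$ is guaranteed by \cite[Theorem 2.7]{neufeld2022markov} as already noted in \eqref{eq_q*_finite}.

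I do not anticipate a genuine obstacle here: the lemma is essentially a bookkeeping consequence of Proposition~\ref{prop_V_equals_TV}. The only point requiring a modicum of care is the interchange of $\max_{b\in A}$ with the pointwise identification $Q^*(y,\cdot)$ versus $V(y)$ — but since $A$ is finite and, per Proposition~\ref{prop_V_equals_TV}, $\sup_{a\in A}Q^*(x,a)=V(x)$ holds for every $x\in\X$, applying this relation at the (random) point $X_1$ inside the expectation is legitimate because it is a deterministic pointwise equality of functions on $\X$. The assumption \eqref{eq_condition_alpha} and the restriction of $\mathcal P$ to $\mathcal P_1^{(q,\varepsilon)}$ or $\mathcal P_2^{(q,\varepsilon)}$ are needed only to be able to invoke Proposition~\ref{prop_V_equals_TV} and the finiteness of $V$; no further structural properties of the Wasserstein ambiguity sets enter this particular argument.
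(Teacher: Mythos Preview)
Your proof is correct and follows exactly the same approach as the paper: both unwind the definition of $HQ^*$, invoke Proposition~\ref{prop_V_equals_TV} to replace $\sup_{b\in A}Q^*(\cdot,b)$ by $V(\cdot)$ inside the expectation, and recognize the resulting expression as the definition of $Q^*$. Your additional remarks on finiteness and the legitimacy of the pointwise substitution are sound but go slightly beyond what the paper spells out.
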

\textbf{Proof of Lemma~\ref{lem_bellman_q}}
This follows directly by definition of $Q^*$ and by Proposition~\ref{prop_V_equals_TV}. Indeed, let $(x,a)\in \mathcal{X} \times A$. Then, we have
\begin{align*}
(HQ^*)(x,a)&=\inf_{\PP \in \mathcal{P}(x,a)}\E_{\PP} \big[r(x,a,X_1)+\alpha  \sup_{b\in A}Q^*(X_1,b)\big]\\
&=\inf_{\PP \in \mathcal{P}(x,a)}\E_{\PP}\big[r(x,a,X_1)+\alpha  V(X_1)\big]\\
&=Q^*(x,a).\qed
\end{align*} 

Moreover, we observe that the operator $H$ is a contraction with respect to the supremum norm defined in \eqref{eq_defn_supremum_norm}.
\begin{lem}\label{lem_contraction}
For any maps $q_i: \mathcal{X}\times A \rightarrow \R$, $i=1,2$, we have
$
\left\|Hq_1-Hq_2\right\|_{\infty}\leq \alpha \left\| q_1-q_2 \right\|_{\infty}.
$
\end{lem}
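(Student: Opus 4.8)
The statement is the standard $\alpha$-contraction property of the robust Bellman operator; the proof is a direct estimate. The plan is to fix an arbitrary pair $(x,a) \in \mathcal{X}\times A$, bound $|(Hq_1)(x,a)-(Hq_2)(x,a)|$ by $\alpha\|q_1-q_2\|_\infty$ uniformly in $(x,a)$, and then take the supremum over $\mathcal{X}\times A$.

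\textbf{Key steps.} First I would use the elementary fact that for any two families $(u_i)_{i\in I}$, $(v_i)_{i\in I}$ of reals bounded below one has $\bigl|\inf_{i\in I} u_i - \inf_{i\in I} v_i\bigr| \le \sup_{i\in I}|u_i - v_i|$, applied with $I = \mathcal{P}(x,a)$, $u_\PP = \E_\PP[r(x,a,X_1)+\alpha\max_{b\in A}q_1(X_1,b)]$ and $v_\PP = \E_\PP[r(x,a,X_1)+\alpha\max_{b\in A}q_2(X_1,b)]$. This reduces the problem to estimating, for each fixed $\PP\in\mathcal{P}(x,a)$,
\[
|u_\PP - v_\PP| = \alpha\,\Bigl|\E_\PP\Bigl[\max_{b\in A}q_1(X_1,b)-\max_{b\in A}q_2(X_1,b)\Bigr]\Bigr| \le \alpha\,\E_\PP\Bigl[\Bigl|\max_{b\in A}q_1(X_1,b)-\max_{b\in A}q_2(X_1,b)\Bigr|\Bigr],
\]
since the reward term $r(x,a,X_1)$ cancels. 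Second, I would apply the pointwise inequality $\bigl|\max_{b\in A}q_1(y,b)-\max_{b\in A}q_2(y,b)\bigr| \le \max_{b\in A}|q_1(y,b)-q_2(y,b)| \le \|q_1-q_2\|_\infty$, valid for every $y\in\mathcal{X}$ (here we use that $A$ is finite so the maxima are attained). Plugging this in gives $|u_\PP - v_\PP| \le \alpha\|q_1-q_2\|_\infty$ for every $\PP\in\mathcal{P}(x,a)$, hence $|(Hq_1)(x,a)-(Hq_2)(x,a)| \le \alpha\|q_1-q_2\|_\infty$. Taking the supremum over $(x,a)\in\mathcal{X}\times A$ yields the claim.

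\textbf{Main obstacle.} There is essentially no serious obstacle here: everything rests on the two routine inequalities $|\inf u - \inf v|\le\sup|u-v|$ and $|\max u - \max v|\le\max|u-v|$, together with the cancellation of the $r$-term and the discount factor $\alpha<1$ from \eqref{eq_condition_alpha}. The only point requiring a word of care is that the expectations are finite so that the subtraction $\E_\PP[\cdots] - \E_\PP[\cdots]$ is meaningful; this is immediate since $\mathcal{X}$ is finite and $q_1,q_2,r$ are real-valued, so all integrands are bounded. No measurability or integrability subtleties arise, and the argument works verbatim for both choices $\mathcal{P}=\mathcal{P}_1^{(q,\varepsilon)}$ and $\mathcal{P}=\mathcal{P}_2^{(q,\varepsilon)}$ of the ambiguity set.
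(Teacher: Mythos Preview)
Your proposal is correct and follows essentially the same approach as the paper's proof: fix $(x,a)$, use $|\inf u-\inf v|\le\sup|u-v|$ over $\mathcal{P}(x,a)$ to cancel the reward term, then apply the pointwise bound $|\max_b q_1-\max_b q_2|\le\|q_1-q_2\|_\infty$ and take the supremum over $(x,a)$.
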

\textbf{Proof of Lemma~\ref{lem_contraction}}
Consider any maps $q_i: \mathcal{X}\times A \rightarrow \R$, $i=1,2$. Then, we have for all $(x,a)\in \mathcal{X} \times A$ that
\begin{align*}
&\left|(Hq_1)(x,a)-(Hq_2)(x,a)\right|\\
&=\bigg|\inf_{\PP \in \mathcal{P}(x,a)}\E_{\PP}\big[r(x,a,X_1)+\alpha  \sup_{b\in A}q_1(X_1,b)\big]\\
&\hspace{1cm}-\inf_{\PP \in \mathcal{P}(x,a)}\E_{\PP}\big[r(x,a,X_1)+\alpha  \sup_{b\in A}q_2(X_1,b)\big] \bigg|\\
&\leq \sup_{\PP \in \mathcal{P}(x,a)}\bigg|\E_\PP\bigg[r(x,a,X_1)+\alpha \sup_{b\in A} q_2(X_1,b)\\
&\hspace{2.5cm}-r(x,a,X_1)-\alpha \sup_{b\in A} q_1(X_1,b)\bigg]\bigg|\\
&\leq \alpha  \sup_{\PP \in \mathcal{P}(x,a)}\E_\PP\left[\sup_{b\in A} \big|q_2(X_1,b)- q_1(X_1,b)\big|\right]\\
&\leq \alpha \sup_{(y,b) \in \mathcal{X} \times A} |q_2(y,b)-q_1(y,b)|= \alpha \|q_1-q_2\|_{\infty},
\end{align*}
which implies the assertion by taking the supremum with respect to the arguments of $Hq_1(\cdot,\cdot)-Hq_2(\cdot,\cdot)$. \qed
\subsection{Proofs}\label{sec_proofs}
In this section we provide the proofs of the results from Section~\ref{sec_setting} and Section~\ref{sec_robust_q_learning}.

\textbf{Proof of Proposition~\ref{prop_V_equals_TV}}\\
The first equality $\sup_{a \in A} Q^*(x,a)=\T V(x)$ follows by definition of $\T V$. For the second equality $\T V(x)=V(x)$ we 
want to check that \cite[Assumption 2.2]{neufeld2022markov} and \cite[Assumption 2.4]{neufeld2022markov} hold true to be able to
apply  \cite[Theorem 3.1]{neufeld2022markov}. \cite[Assumption 2.2]{neufeld2022markov} is fulfilled (for $p=0$ and $C_P=1$ in the notation of \cite[Assumption 2.2]{neufeld2022markov})  according to \cite[Proposition 3.1]{neufeld2022markov} in the case $\mathcal{P}=\mathcal{P}_1^{(q,\varepsilon)}$, and according to \cite[Proposition 3.3]{neufeld2022markov} in the case $\mathcal{P}=\mathcal{P}_2^{(q,\varepsilon)}$. To verify \cite[Assumption 2.4]{neufeld2022markov}~(i), note that $\mathcal{X}\times A \times \mathcal{X} \ni (x_0,a,x_1) \mapsto r(x_0,a,x_1)$ is continuous since $\mathcal{X}$ and $A$ are finite (endowed with the discrete topology). To show  \cite[Assumption 2.4]{neufeld2022markov}~(ii) note that for all $x_0,x_0',x_1 \in \mathcal{X}$ and $a,a' \in A$ we have
$
|r(x_0,a,x_1)-r(x_0',a',x_1)|\leq L \cdot \left(\|x_0-x_0'\|+\|a-a'\|\right).
$
with $L:=\left(\max_{y_0,y_0'\in \mathcal{X},~b,b'\in A \atop (y_0,b) \neq (y_0',b')} \frac{|r(y_0,b,x_1)-r(y_0',b',x_1)|}{\|y_0-y_0'\|+\|b-b'\|}\right)$.
Similarly, to show \cite[Assumption 2.4]{neufeld2022markov}~(iii), we observe that for all $x_0,x_1 \in \mathcal{X}$ and all $a \in A$ we have
\[
\left|r(x_0,a,x_1)\right| \leq \max_{y_0,y_1 \in \mathcal{X},~b\in A} |r(y_0,b,y_1)|,
\]
i.e., in the notation of \cite[Assumption 2.4]{neufeld2022markov} we have $C_r := \max\left\{1,~\max_{y_0,y_1 \in \mathcal{X},~b\in A} |r(y_0,b,y_1)|\right\}$. To verify \cite[Assumption 2.4]{neufeld2022markov}~(iv) we see that, since $p=0$, we can choose $C_P:=1$ in the notation of \cite[Assumption 2.2]{neufeld2022markov}~(ii) and hence with \eqref{eq_condition_alpha} we get
$
0< \alpha < 1 = \frac{1}{C_P}
$
as required. Hence, the result follows from \cite[Theorem 3.1]{neufeld2022markov}.
\qed

\textbf{Proof of Lemma~\ref{lem_choice_of_lambda}}
For any $\lambda\geq 0$ we have by definition of the $\lambda c$-transform
\begin{align*}
&\E_{\widehat{\PP}(x,a)}\left[-(-f_{t,(x,a)})^{\lambda c}(X_{1})-\varepsilon^q \lambda\right]  \\
= &\E_{\widehat{\PP}(x,a)}\left[-\max_{y \in \X} \left\{-f_{t,(x,a)}(y)-\lambda c(X_1,y) \right \}-\varepsilon^q \lambda\right].
\end{align*}
Therefore, since $\X$ is finite, the map 
$
[0,\infty) \ni \lambda \mapsto G(\lambda):= \E_{\widehat{\PP}(x,a)}\left[-(-f_{t,(x,a)})^{\lambda c}(X_{1})-\varepsilon^q \lambda\right] 
$
is continuous. Hence, the assertion of Lemma~\ref{lem_choice_of_lambda} follows once we have shown that $\lim_{ \lambda \rightarrow\infty} G(\lambda) = -\infty$. To that end, note that as, by assumption, $\min_{y \in \X } c(x,y)= 0 $ for all  $x \in \X$, we have that
\begin{align*}
&\limsup_{\lambda \rightarrow \infty} G(\lambda) \\ &= \limsup_{\lambda \rightarrow \infty}\E_{\widehat{\PP}(x,a)}\bigg[-\max_{y \in \X} \big\{-f_{t,(x,a)}(y)\\
&\hspace{3cm}-\lambda c(X_1,y) \big \}-\varepsilon^q \lambda\bigg]\\
&\leq \limsup_{\lambda \rightarrow \infty}\E_{\widehat{\PP}(x,a)}\bigg[\max_{z \in \X} f_{t,(x,a)}(z)\\
&\hspace{3cm}-\max_{y \in \X}\left\{-\lambda c(X_1,y) \right\}-\varepsilon^q \lambda\bigg] \\
& = \limsup_{\lambda \rightarrow \infty} \bigg(\max_{z \in \X} f_{t,(x,a)}(z)\\
&\hspace{2cm}+\lambda \E_{\widehat{\PP}(x,a)}\left[\min_{y \in \X} c(X_1,y) \right]-\varepsilon^q \lambda  \bigg)
\end{align*}
\begin{align*}
& = \max_{z \in \X} f_{t,(x,a)}(z)+\limsup_{\lambda \rightarrow \infty}  \left(-\varepsilon^q \lambda  \right) = -\infty.\qed
\end{align*}

\textbf{Proof of Theorem~\ref{thm_q_learning_wasserstein}}
Let $(x_0,\ab) \in \mathcal{X} \times \mathcal{A}$.\\

Assume that either $\mathcal{P} = \mathcal{P}_1^{(q,\varepsilon)}$ and $c = c_1$, or $\mathcal{P} = \mathcal{P}_2^{(q,\varepsilon)}$ and $c = c_2$. Then, we show for all $(x,a) \in \mathcal{X}\times A$
$
\lim _{ t \rightarrow \infty} Q_t(x,a) = Q^*(x,a)  \qquad \widehat{\PP}_{x_0,\ab}-\text{almost surely,}
$
which shows simultaneously both (i) and (ii).
To that end, let $(x,a) \in \mathcal{X}\times A$ be fixed. Then we rearrange the terms in \eqref{eq_q_learning_c_transform} and write 
\begin{equation}\label{eq_q_learning_finite_proof_1_c_transform}
\begin{aligned}
Q_{t+1}(x,a)= &(1-\gamma_t(x,a,X_t))Q_{t}(x,a) \\
&\hspace{-1cm}+ \gamma_t(x,a,X_t) \bigg(-(-f_{t,(x,a)})^{\lambda_t c}(X_{t+1})-\varepsilon^q \lambda_t \bigg),
\end{aligned}
\end{equation}
where $\lambda_t$ is as defined in \eqref{eq_defn_lambda_t}, see also Lemma~\ref{lem_choice_of_lambda} for its existence.
Note that by construction $Q_{t}(x,a) \in \R$ for all $(x,a) \in \X \times A$.
We define for every $t \in \N_0$ the map
$
\mathcal{X} \times A  \ni (x,a) \mapsto \Delta_t(x,a):= Q_{t}(x,a)-Q^*(x,a) \in \R.
$
Note that indeed, as for all $(x,a) \in \X \times A$ we have $Q_{t}(x,a)$ as well as $Q^*(x,a)$ is finite (compare \eqref{eq_q*_finite}), we directly conclude the finiteness of $\Delta_t(x,a)$ for all $(x,a) \in \X \times A$. 
Moreover, we obtain by  \eqref{eq_q_learning_finite_proof_1_c_transform} and by using the relation $\gamma_t(x,a,X_t)= \widetilde{\gamma}_t \one_{\{(X_t,a_t(X_t))=(x,a)\}}$ that
\begin{equation}\label{eq_q_learning_finite_proof_2_c_transform}
\begin{aligned}
\Delta_{t+1}(x,a)&= (1-\gamma_t(x,a,X_t))\Delta_t(x,a) \\
&\hspace{0.25cm}+ \gamma_t(x,a,X_t) \bigg(-(-f_{t,(x,a)})^{\lambda_t c}(X_{t+1})\\
&\hspace{3.25cm}-\varepsilon^q \lambda_t-Q^*(x,a)\bigg) \\
 &=(1-\gamma_t(x,a,X_t))\Delta_t(x,a) \\
 &\hspace{0.25cm}+ \gamma_t(x,a,X_t) \one_{\{(X_t,a_t(X_t))=(x,a)\}}\\ &\hspace{0.25cm}\cdot\bigg(-(-f_{t,(X_t,a_t(X_t))})^{\lambda_t c}(X_{t+1})\\
 &\hspace{2.5cm}-\varepsilon^q \lambda_t-Q^*(X_t,a_t(X_t))\bigg).
\end{aligned}
\end{equation}
Next, we define for every $t\in \N_0$ the random variable
$
F_t(x,a):=\one_{\{(X_t,a_t(X_t)=(x,a)\}}\bigg(-(-f_{t,(X_t,a_t(X_t))})^{\lambda_t c}(X_{t+1})-\varepsilon^q \lambda_t-Q^*(X_t,a_t(X_t))\bigg),
$
which by \eqref{eq_q*_finite} is finite for all $(x,a) \in \X \times A$. We consider the filtration $(\mathcal{G}_t)_{t\in \N_0}$ with  
$
\mathcal{G}_t:=\sigma \left( \left\{X_1,\dots,X_t\right\}\right),~~t\in \N,
$
and $\mathcal{G}_0=\{\emptyset, \Omega\}$ being the trivial sigma-algebra. Note that, in particular, $\Delta_t(x,a)$, $\gamma_t(x,a)$ and $F_{t-1}(x,a)$ are $\mathcal{G}_t$-measurable for all $t\in \N$. Moreover, we have by \eqref{eq_defn_P_MDP} and by Proposition~\ref{prop_dual_lambda_c_transform} that $\widehat{\PP}_{x_0,\ab}-\text{almost surely}$
\begin{align*}
\bigg|\E_{\widehat{\PP}_{x_0,\ab}}&[F_t(x,a)~|~\mathcal{G}_t]\bigg|\\
&=\bigg|\one_{\{(X_t,a_t(X_t)=(x,a)\}}\\
&\hspace{0.25cm}\cdot \E_{\widehat{\PP}(X_t,a_t(X_t))}\big[-(-f_{t,(X_t,a_t(X_t))})^{\lambda_t c}(X_{t+1})\\
&\hspace{3.5cm}-\varepsilon^q \lambda_t-Q^*(X_t,a_t(X_t))\big]\bigg|
\end{align*}
\begin{align*}
&= \one_{\{(X_t,a_t(X_t)=(x,a)\}}\\
&\hspace{0.25cm}\cdot \bigg|\sup_{\lambda \geq 0} \bigg(\E_{\widehat{\PP}(X_t,a_t(X_t))}\big[-(-f_{t,(X_t,a_t(X_t))})^{\lambda c}(X_{t+1})\\
&\hspace{3.5cm}-\varepsilon^q \lambda\big]\bigg)-Q^*(X_t,a_t(X_t))\bigg|\\
&=\one_{\{(X_t,a_t(X_t)=(x,a)\}}\\
&\hspace{0.25cm}\cdot \bigg|\inf_{\PP \in  \mathcal{P}(X_t,a_t(X_t))}\E_{\PP}\big[(f_{t,(X_t,a_t(X_t))})(X_{t+1})\big]\\
&\hspace{4cm}-Q^*(X_t,a_t(X_t))\bigg|.
\end{align*}
Thus, \eqref{eq_defn_f_t}, \eqref{eq_defn_HQ}, and
Lemma~\ref{lem_bellman_q} show that $\widehat{\PP}_{x_0,\ab}-\text{almost surely}$
\begin{align}
&\bigg|\E_{\widehat{\PP}_{x_0,\ab}}[F_t(x,a)~|~\mathcal{G}_t] \bigg| \notag \\
&= \one_{\{(X_t,a_t(X_t)=(x,a)\}} \notag  \\
&\hspace{0.5cm}\cdot \bigg|\inf_{\PP \in \mathcal{P}(X_t,a_t(X_t))}\E_{\PP}\big[(f_{t,(X_t,a_t(X_t))})(X_{t+1})\big] \notag  \\
&\hspace{5cm}-Q^*(X_t,a_t(X_t))\bigg| \notag
\end{align}
\begin{align}
&=\one_{\{(X_t,a_t(X_t)=(x,a)\}}\cdot \bigg|(HQ_t)(X_t,a_t(X_t)) \notag  \\
&\hspace{5cm}-Q^*(X_t,a_t(X_t))\bigg|\label{eq_HQ_Q}\\
&=\one_{\{(X_t,a_t(X_t)=(x,a)\}}\cdot \bigg|(HQ_t)(X_t,a_t(X_t)) \notag  \\
&\hspace{4.5cm}-(HQ^*)(X_t,a_t(X_t))\bigg|.  \notag
\end{align}
Hence it follows with Lemma~\ref{lem_contraction} that  $\widehat{\PP}_{x_0,\ab}-\text{almost surely}$
\begin{align*}
\left\|\E_{\widehat{\PP}_{x_0,\ab}}[F_t(\cdot, \cdot)~|~\mathcal{G}_t] \right\|_{\infty} &\leq \left\| (HQ_t)-(HQ^*)\right\|_{\infty}\\
&\leq \alpha \left\| Q_t-Q^*\right\|_{\infty} = \alpha \|\Delta_t\|_{\infty},
\end{align*}
where the norm $\| \cdot \|$ is defined in \eqref{eq_defn_supremum_norm}.
Next, recall that $C_r := \max \left \{1,~\max_{y_0,y_1 \in \mathcal{X},~b\in A} |r(y_0,b,y_1)|\right\}$. Note that by \eqref{eq_defn_f_t}, by the $\lambda c$-transform from Definition~\ref{def_lambda_c}, and since $\inf_{x,y\in \X} c(x,y)=0$, we have for all $t\in \N_0$ that
\begin{align*}
&\left(-f_{t,(X_t,a_t(X_t))}\right)^{\lambda_t c}(X_{t+1}) + \alpha \min_{y' \in \X} \max_{b' \in A} Q^*(y',b') \\
&= \left(-r(X_t,a_t(X_t),\cdot)-\alpha \max_{b \in A} Q_t(\cdot ,b) \right)^{\lambda_t c}(X_{t+1}) \\
&\hspace{2cm}+ \alpha \min_{y' \in \X} \max_{b' \in A} Q^*(y',b')
\end{align*}
\begin{align*}
&\leq \left(C_r-\alpha \max_{b \in A} Q_t(\cdot,b)\right)^{\lambda_t c}(X_{t+1})  \\
&\hspace{2cm}+ \alpha \min_{y' \in \X} \max_{b' \in A} Q^*(y',b') \\
&\leq\max_{z \in \X} \left(C_r-\alpha \max_{b \in A} Q_t(\cdot,b)\right)^{\lambda_t c}(z) \\
&\hspace{2cm}+\alpha \min_{y' \in \X} \max_{b' \in A} Q^*(y',b')
\end{align*}
The latter expression coincides with 
\begin{align*}
\max_{z,y\in \mathcal{X}} \bigg(C_r+ &\alpha \min_{y' \in \X} \max_{b' \in A} Q^*(y',b')\\
&-\alpha \max_{b \in A} Q_t(y,b)-\lambda_t c(z,y)\bigg),
\end{align*} which implies
\begin{equation}\label{eq_ft_ineq_1}
\begin{aligned}
&\left(-f_{t,(X_t,a_t(X_t))}\right)^{\lambda_t c}(X_{t+1}) + \alpha \min_{y' \in \X} \max_{b' \in A} Q^*(y',b')\\
&\leq \max_{z,y\in \mathcal{X}} \bigg(C_r+ \alpha  \max_{b' \in A} Q^*(y,b')  \\
&\hspace{2cm}-\alpha \max_{b \in A} Q_t(y,b)-\lambda_t c(z,y)\bigg)\\
&\leq \max_{z,y\in \mathcal{X}} \left(C_r+ \alpha  \max_{b \in A} \left\{ Q^*(y,b) - Q_t(y,b)\right\}-\lambda_t c(z,y)\right)\\
&\leq \max_{z,y\in \mathcal{X}}\left(C_r+ \alpha  \|\Delta_t\|_{\infty}-\lambda_t c(z,y)\right) \\
&= C_r+\alpha  \|\Delta_t\|_{\infty}=:M \in \R,
\end{aligned}
\end{equation}
and similarly, since $c(z,z)=0$ for all $z \in \X$,
\begin{equation*}
\begin{aligned}
&\left(-f_{t,(X_t,a_t(X_t))}\right)^{\lambda_t c}(X_{t+1}) +\alpha \min_{y' \in \X} \max_{b' \in A} Q^*(y',b')\\
&\geq \min_{z\in \mathcal{X}} \max_{y\in \mathcal{X}} \bigg(-C_r+\alpha \min_{y' \in \X} \max_{b' \in A} Q^*(y',b')  \\
&\hspace{3cm}-\alpha \max_{b \in A} Q_t(y,b)-\lambda_t c(z,y)\bigg)\\
&\geq \min_{z\in \mathcal{X}} \max_{y\in \mathcal{X}} \bigg(-C_r+\alpha \min_{y' \in \X} \min_{b \in A} \left( Q^*(y',b) -Q_t(y,b)\right)\\
&\hspace{5cm}-\lambda_t c(z,y)\bigg)\\
&\geq \min_{z\in \mathcal{X}} \max_{y\in \mathcal{X}} \bigg(-C_r-\alpha \max_{y' \in \X, b \in A} \left| Q_t(y,b)-Q^*(y',b) \right|\\
&\hspace{5cm}-\lambda_t c(z,y)\bigg)\\
&\geq \min_{z\in \mathcal{X}} \left(-C_r-\alpha \max_{y' \in \X, b \in A}\left| Q_t(z,b)-Q^*(y',b) \right|\right)\\
&\geq  -C_r-\alpha \max_{z,y' \in \X, b \in A} \bigg( \left|  Q_t(z,b)-Q^*(z,b)\right|\\
&\hspace{3cm}+\left| Q^*(z,b)-Q^*(y',b) \right|\bigg)
\end{aligned}
\end{equation*}
\begin{equation}\label{eq_ft_ineq_2}
\begin{aligned}
&\geq -C_r -\alpha \|\Delta_t\|_{\infty} \\
&\hspace{1cm}-\alpha \max_{z,y' \in \X, b \in A}\left|  Q^*(z,b)-Q^*(y',b)\right|=:m\in \R.
\end{aligned}
\end{equation}
We define $C:=( 4\alpha^2 +( 2C_r+\alpha \max_{z,y' \in \X, b \in A}| Q^*(z,b)-Q^*(y',b)|)^2 )< \infty$. Then, by using Popoviciu's inequality on variances\footnote{Popoviciu's inequality (see \cite{popoviciu1935equations} or \cite{sharma2010some}) states that for all random variables $Z$ on a probability space $(\Omega, \mathcal{F}, \PP)$ satisfying $m\leq Z \leq M$ for some $-\infty<m\leq M < \infty$ we have $\operatorname{Var}_{\PP}(Z) \leq \frac{1}{4}\left( M-m\right)^2$.} applied to the bounds $m$, $M$ computed in \eqref{eq_ft_ineq_1} and \eqref{eq_ft_ineq_2}, and by using the inequality  $(a+b)^2 \leq 2\left(a^2+b^2\right)$ which holds for all $a,b \in \R$, we see for every $(x,a) \in \X \times A$ that  $\widehat{\PP}_{x_0,\ab}-\text{almost surely}$
\begin{equation*}
\begin{aligned}
&\operatorname{Var}_{\widehat{\PP}_{x_0,\ab}}(F_t(x,a)~|~\mathcal{G}_t ) \\
&=\one_{\{(X_t,a_t(X_t)=(x,a)\}}\\
&\hspace{1cm}\cdot\operatorname{Var}_{\widehat{\PP}(X_t,a_t(X_t))}\left((-f_{t,(X_t,a_t(X_t))})^{\lambda_t c}(X_{t+1})\right)\\
&=\one_{\{(X_t,a_t(X_t)=(x,a)\}}\\
&\hspace{1cm}\cdot\operatorname{Var}_{\widehat{\PP}(X_t,a_t(X_t))}\bigg((-f_{t,(X_t,a_t(X_t))})^{\lambda_t c}(X_{t+1})\\
&\hspace{4cm}+\alpha \min_{y' \in \X} \max_{b' \in A} Q^*(y',b')\bigg)\\
&\leq \frac{1}{4}(M-m)^2 \\
&=\frac{1}{4} \bigg(2\alpha \|\Delta_t\|_{\infty} +2C_r \\
&\hspace{2cm}+\alpha \max_{z,y' \in \X, b \in A}\left| Q^*(z,b)-Q^*(y',b)\right|\bigg)^2\\
&\leq \frac{1}{2} \bigg( 4\alpha^2 \|\Delta_t\|_{\infty}^2 \\
&\hspace{0.5cm}+ \big( 2C_r+\alpha \max_{z,y' \in \X, b \in A}\left|Q^*(z,b)-Q^*(y',b)\right|\big)^2 \bigg)\\
&\leq \left( 4\alpha^2 +\left( 2C_r+\alpha \max_{z,y' \in \X, b \in A}\left| Q^*(z,b)-Q^*(y',b)\right|\right)^2 \right) \\
&\hspace{1cm}\cdot \left(1+\|\Delta_t\|_{\infty}^2\right) \leq  C \cdot \left(1+\|\Delta_t\|_{\infty}\right)^2.
\end{aligned}
\end{equation*}
This means the assumptions of Lemma~\ref{lem_convergence} are fulfilled, and we obtain that $\Delta_t(x,a) \rightarrow 0$ for $t\rightarrow \infty$  $\widehat{\PP}_{x_0,\ab}$-almost surely, which implies, by definition of $\Delta_t$, that $Q_t(x,a) \rightarrow Q^*(x,a)$ for $t \rightarrow \infty$  $\widehat{\PP}_{x_0,\ab}$-almost surely.
\qed

\textbf{Proof of Proposition~\ref{prop_bounds}}
The conditions of \cite[Assumption 2.1-2.4]{neufeld2023bounding} are satisfied w.r.t.\,$L_P$ and $L_r$ defined in \eqref{Lp} and \eqref{Lr} since here both the state and action space are finite. Hence the result follows from \cite[Theorem 3.1]{neufeld2023bounding}.
\qed
\end{document}